\definecolor{lightgray}{gray}{0.9}
\theoremstyle{plain}
\newtheorem{theorem}{Theorem}
\newtheorem{lemma}[theorem]{Lemma}
\theoremstyle{definition}
\newtheorem{definition}[theorem]{Definition}
\theoremstyle{remark}
\newcommand{\eps}{\varepsilon}
\newcommand{\CD}{\mathcal{D}}
\newcommand{\CX}{\mathcal{X}}
\newcommand{\CY}{\mathcal{Y}}
\newcommand{\baq}{\bar{q}}
\providecommand{\indic}[1]{\mathbf{1}\{#1\}}
\DeclareMathOperator{\E}{E}
\newcommand{\generate}{{\tt generate}}
\newcommand{\filter}{{\tt filter}}
\title{Escaping Collapse: The Strength of Weak Data\\for Large Language Model Training\thanks{Authors ordered alphabetically. Author contributions are listed at the end.}}
\author[1]{Kareem Amin}
\author[2]{Sara Babakniya}
\author[1]{Alex Bie}
\author[1]{\authorcr Weiwei Kong}
\author[1]{Umar Syed}
\author[1]{Sergei Vassilvitskii}
\affil[1]{Google Research, New York}
\affil[2]{University of Southern California, Los Angeles}
\date{}
\begin{document}
\maketitle

\begin{abstract}
Synthetically-generated data plays an increasingly larger role in training large language models. %
However, while synthetic data has been found to be useful, studies have also shown that without proper curation it can cause LLM performance to plateau, or even “collapse”, after many training iterations.  In this paper, we formalize this question and develop a theoretical framework to investigate how much curation is needed in order to ensure that LLM performance continually improves. %
Our analysis is inspired by boosting, a classic machine learning technique that leverages a very weak learning algorithm to produce an arbitrarily good classifier. The approach we analyze %
subsumes many recently proposed methods for training LLMs on synthetic data, and thus our analysis sheds light on why they are successful, and also suggests opportunities for future improvement. We present experiments that validate our theory, and show that dynamically focusing labeling resources on the most challenging examples --- in much the same way that boosting focuses the efforts of the weak learner --- leads to improved performance.
\end{abstract}

\section{Introduction}

Large Language Models (LLMs) represent the frontier of artificial intelligence, and are trained on vast amounts of human-generated data. However, much of the high-quality publicly available data on the Internet has been exhausted, and limits on generating new tokens threaten to slow progress on LLM training. 

As a consequence, synthetically-generated datasets are playing an important role in the training of LLMs. Synthetic data have been shown to improve the performance of real large models on a range of tasks~\citep{bai2022constitutional, zelikman2022star, gulcehre2023rest, singh2024beyond}.
On the other hand, the circuitous nature of training new LLMs on data generated by previous generations of LLMs has caused concerns of model collapse \citep{shumailov2024ai,alemohammad2024selfconsuming}. %

What makes synthetic data beneficial or harmful? 
The answer depends on the precise elements of the synthetic data recipe, and  %
one of our main contributions is a theoretical framework that unifies existing elements of synthetic data approaches, facilitating reasoning about when they might succeed or fail.

Basic learning theory and empirical studies suggest that a necessary condition for avoiding model collapse is that synthetic data is curated in some way to inject signal that is exogenous to the system that produced the original data. This can come in many forms: identification of high-quality subsets of synthetic data, human rewrites of poor responses, a separate model rating the responses, \emph{etc}. A key question is how much curation is \emph{sufficient} to not only avoid collapse, but also to converge to an optimal LLM? Our answer, which we will make precise, is the minimum amount. 

Specifically, we analyze a simple procedure for improving an LLM, in which we iteratively (1) generate synthetic responses from the model; (2) obtain additional responses from an exogenous source; and (3) train the next generation of the model with both types of responses. 
This procedure captures previous successful approaches for training LLMs on synthetic data \citep{zelikman2022star, gulcehre2023rest, singh2024beyond}, and so our analysis provides an explanation for why they work. More broadly, it models the ad hoc processes employed by model developers. We show that if at least a $\beta > 0$ fraction of the non-synthetic responses (\emph{i.e.}, the ones produced by an external signal) are correct, then the iterative procedure converges to an optimal LLM (\emph{i.e.}, one that returns a correct response to each prompt). See Theorem \ref{thm:main} for the precise statement and exact convergence rate.  

\paragraph{Connection to Boosting.} At a high level, our analysis shows how to use synthetic data to focus curation on regions of the prompt space where the models %
perform poorly.
In this way, this approach resembles AdaBoost, a classic machine learning algorithm that iteratively focuses a weak learning algorithm on training examples where previous weak hypotheses performed poorly. Unlike boosting, however, our assumptions on the data and the learning method are inverted. 
Instead of a weak learner, %
we assume access to powerful LLMs that can perfectly model an input distribution, which we call \emph{strong learners.} However, we also assume access to only weak information about the distribution we wish to model (specifically, that $\beta > 0$), i.e. \emph{weak data.} This is in contrast to traditional boosting where the algorithm 
has access to \emph{strong data}, i.e., independent and identically distributed (i.i.d.) examples from some target distribution. 

This connection between the theory of boosting and learning from synthetic data has been largely unexamined in the existing literature. Our analysis also suggests practical ways to improve current algorithms for learning from synthetic data. In our experiments, we show that scarce curation resources are better utilized by focusing their efforts on producing responses to the most challenging prompts in the training set.

\section{Related work}
Training models on human-generated data only has limitations such as scalability, biases, errors, and potential privacy considerations \citep{kurakin2023harnessing,singh2024beyond,gilardi2023chatgpt,long-etal-2024-llms}. \citet{longpre2024consent} highlights a challenge: as LLMs scale, the demand for high-quality data increases, yet access to such data becomes more restricted due to copyright and privacy constraints. Given these challenges, integrating synthetic data into training pipelines is essential but comes with risks. %

\paragraph{Model Collapse.}
Several studies highlight a critical concern regarding the use of synthetic data in training LLMs, known as model collapse. This phenomenon is caused by \textit{improper} use of synthetic data in training the model, which can cause performance degradation or even complete failure of the model \citep{schaeffer2025position}. \citet{alemohammad2024selfconsuming,shumailov2024ai,Hataya_2023_ICCV,gerstgrasser2024is} have empirically studied model collapse in various settings, demonstrating the detrimental effects of iterative training on only synthetic data and highlighting how this process can severely degrade model performance.

\citet{dohmatob2024model,pmlr-v235-dohmatob24b,bertrand2024on,dey2024universality,seddik2024how} study model collapse theoretically. Their results show that recursively retraining only on synthetic data causes performance degradation in different models. However, combining synthetic and labeled training data \citep{bertrand2024on,dey2024universality,seddik2024how,kazdan2024collapse,ferbach2024self} can mitigate this performance degradation. In contrast to our work, they do not demonstrate continuous improvement toward an optimal model.  

Recently, \citet{suresh2024rate} and \citet{feng2024beyond} provide theoretical explanations for model collapse under restricted models, including Gaussian mixture models and linear classifiers. \citet{feng2024beyond} shows that the presence of a verifier to select more desired synthetic data can improve the performance in non-recursive settings. \citet{firdoussi2024maximizing} extends this work to include both noisy labels and features and provide theoretical and empirical results on the impacts of combining synthetic and real data. Our results do not assume a specific learning class, instead relying on a black-box strong learning assumption.

\paragraph{Self Improving LLMs}
Self-evolving or self-improving LLMs \citep{tao2024survey} is a new research direction that leverages the model itself to generate or guide the creation of high-quality data \citep{wang-etal-2023-self-instruct,huang-etal-2023-large,pmlr-v235-gillman24a}, which can then be used for fine-tuning \citep{pmlr-v235-yuan24d,pmlr-v235-chen24j} or RLHF \citep{pace2024west}, enabling continuous improvement with minimum or no external intervention.

STaR \citep{zelikman2022star} presents a bootstrapping mechanism to enhance the reasoning capabilities of LLMs by iteratively asking the model to generate step-by-step ``chain-of-thought'' rationales for questions, filtering out incorrect answers, fine-tuning the original model on all correct rationales, and repeating the process. ReST \citep{gulcehre2023rest} proposes a combination of self-generated data and offline reinforcement learning. The method operates iteratively in two primary phases: a ``Grow'' phase, where for each input (context), the LLM generates multiple outputs to expand the training dataset, and an ``Improve'' phase, which involves ranking and filtering this augmented dataset using a learned reward model trained on human preferences. $\text{ReST}^\text{EM}$ \citep{singh2024beyond} is a modified version of ReST with two main differences; they do not augment the generated data with human-generated data, and in the ``Improve'' step instead of fine-tuning the model in the previous iteration, they fine-tune the base model. All of the above methods can be modeled in our framework, and thus we provide a better theoretical understanding about why and when such methods can work. 

Recent works \citep{song2025mind,yang2025spend,das2024retraining} take a more theoretical approach to understand self-improving algorithms. \citet{song2025mind} introduces a new metric to analyze how different components contribute to self-improvement formally. In parallel, \citet{yang2025spend} studies how to optimally allocate a fixed computational budget across iterations of synthetic data generation and fine-tuning, showing that exponential growth policies outperform constant or linear ones in both theory and practice.

\section{Preliminary Notation}
\paragraph{Datasets.}
Let $\CX$ be the set of all possible \emph{prompts}, and let $\CY$ be the set of all possible \emph{responses}, which we also call \emph{labels}. An element of $\CX \times \CY$ is a \emph{labeled prompt}. A subset of $\CX$ is a \emph{prompt set}, and a subset of $\CX \times \CY$ is a \emph{dataset}.

For any prompt set $P$, let $P(x)$ denote the number of times prompt $x$ appears in $P$, and for any dataset $D$, let $D(x, y)$ denote the number of times labeled prompt $(x, y)$ appears in $D$. Typically we have $P(x) \in \{0, 1\}$ and $D(x, y) \in \{0, 1\}$. However, we also allow datasets to contain multiple copies of the same element, where the multiplicity, or \emph{weight}, of an element can be any non-negative real number, i.e., $D(x, y) \in \mathbb{R}_+$. We write $(x, y) \in D$ if and only if $D(x, y) > 0$ and $|D| = \sum_{x, y} D(x, y)$. Datasets with general weights are formed by using the weighted union operation: If $D_0$ and $D_1$ are datasets, and $\lambda_0, \lambda_1 > 0$, then $D = \lambda_0 D_0 \uplus \lambda_1 D_1$ is the dataset defined by $D(x, y) = \lambda_0 D_0(x, y) + \lambda_1 D_1(x, y)$. 

For any dataset $D$ let
$
D(y | x) = {D(x, y)} / {\sum_{y'} D(x, y')}
$
be the fraction of labeled prompts in $D$ with prompt $x$ that have response $y$. Define $D(y | x) = 0$ if $\sum_{y'} D(x, y') = 0$. 

\paragraph{LLMs.}
A large language model, or \emph{LLM}, is a function that maps each prompt in $\CX$ to a distribution on the responses $\CY$. We will denote LLMs by the symbol $g$, and let $g(x)$ denote the distribution over labels $\CY$ of $g$ when evaluated on prompt $x$.

Let $q: \CX \times \CY \rightarrow \{0, 1\}$ be the \emph{quality} function, where $q(x, y) = 1$ indicates that $y$ is a good response to prompt $x$. 

\section{Problem Setting}
\label{sec:desiderata}

We consider a setting
where a sequence of LLMs $g_1, g_2, \dots$ are learned on a sequence of datasets $\mathcal{D}_1, \mathcal{D}_2, \dots$. Given a prompt set $P$, our high-level goal is to produce an LLM that generates high quality responses for every prompt in $P$. We illustrate this meta-algorithm in  Setting \ref{alg:generation}.

Unlike classical learning, where the learner has access to samples from the target distribution, we assume that the learner only has access to labeled examples constructed by a data generation procedure that we control, denoted by the function ${\tt GenerateData}$. Data generation might make use of synthetic data, produced by the previous generation's LLM $g_{t-1}$, and exogenous (\emph{i.e.}, non-synthetic) signals. 

In order to formalize our goal, we make precise the capabilities of ${\tt learner}$, the capabilities of ${\tt GenerateData}$, and our notion of quality. 

\begin{algorithm}
\floatname{algorithm}{Setting}
\caption{Data Generation Problem \label{alg:generation}}
\begin{algorithmic}[1]
\Statex {\bf Given:} Prompt set $P$, number of iterations $T$.
\State $g_0 = \bot$
\For{$t = 1, 2, \dots, T$}
\State $D = {\tt GenerateData}(P, g_{t-1})$
\State $\CD_t = \CD_{t-1} \uplus D$ 
\State $g_t = {\tt learner}(\CD_t)$ 
\EndFor
\State Output $g_T$. 
\end{algorithmic}
\end{algorithm}

\subsection{Strong Learning}

We first introduce the concept of a strong learner. 
\begin{definition}[Strong Learner] \label{defn:learner}  For any LLM $g$ let $g(y|x)$ be the probability that the distribution $g(x)$
assigns to response $y$. The function ${\tt learner}$ takes as input a dataset $D \subset \CX \times \CY$ and outputs an LLM $g$ such that $g(y|x) = D(y|x)$ for all $(x, y) \in D$.
\end{definition}
The procedure ${\tt learner}$ trains an LLM that matches the conditional probability of each response given a prompt in the input dataset. That is, we assume that the model class has the capacity to match this distribution exactly, and the learning procedure can find the model parameters that perfectly fit the data. This assumption is motivated by the fact that deep neural networks instantiate all modern LLMs and are both theoretically capable of approximating arbitrary functions~\citep{maiorov1999lower} and frequently observed to fit their training inputs \citep{zhang2021understanding}.

While LLMs are powerful, the largest models contain billions of parameters and are extremely expensive to train. Thus, training $T$ state-of-the art models from scratch is prohibitively  expensive. In contrast, given model $g_{t-1}$ trained on $\mathcal{D}_{t-1}$, it is significantly less expensive to train a model $g_t$ on $D_{t-1} \uplus D$, for some choice $D$. In other words, our setting models continued training, where the training mixture for the next LLM is constructed by augmenting the existing data mixture with new examples.  

\subsection{Data Generation}

Creating data for the next generation of an LLM might involve making use of synthetic data produced by the current generation of the LLM. To avoid model collapse, some degree of data curation happens in practice. This curation may make use of an exogenous signal previously unknown to our training algorithm. Curation may also take the form of evaluating the quality of existing synthetic data. We discuss each of these capabilities in greater detail. 

\paragraph{Synthetic Data.} Given an LLM $g$, and a prompt $x$, we can generate a synthetic response for $x$ by sampling from distribution $g(x)$. Overall, we assume that synthetic data generation is relatively inexpensive, and permit data generation procedures that make calls to previously-trained LLMs. 

\paragraph{Noisy Filter.} We assume that we can noisily partition a dataset into labeled and unlabeled prompts based on the quality function.

\begin{definition}[$\gamma$-noisy Filter] \label{defn:filter} Let $\gamma \in [0, 1]$. The function $\filter_\gamma(D)$ takes as input a dataset $D \subseteq \CX \times \CY$ and outputs $(S^+, P^-) = \filter_\gamma(D)$, where the following holds with independent probability for each $(x, y) \in D$:
\begin{itemize}
    \item If $q(x, y) = 1$ then $(x, y) \in S^+$ with probability at least $\gamma$, and otherwise $x \in P^-$.
    \item If $q(x, y) = 0$ then $x \in P^-$. 
\end{itemize}
\end{definition}
In other words, $\gamma$ is a lower bound on the recall of $\filter_\gamma$ for recognizing high-quality responses. For many applications, recognizing that a synthetic response is a high-quality for a given prompt is significantly easier than generating the response from scratch. For instance, if the dataset contains arithmetic or coding problems, it is relatively easy to programmatically verify a correct answer.

\paragraph{Weak Labeler.}
Key to our work is the notion of a {\em weak labeler}, a function, that given any set of prompts produces responses with average quality bounded away from zero.  %

To formally define it, we use an auxiliary function $a_P: P \rightarrow \CY$, which generates labels for all prompts in a set $P$. 

\begin{definition}[$\beta$-weak Labeler] \label{defn:labeler} Let $\beta \in [0, 1]$. The function ${\tt labeler}_\beta$ takes as input a prompt set $P \subset \CX$, and uses an auxiliary function $a_{P} : \CX \rightarrow \CY$ to label every prompt in $P$. Formally,
\[
{\tt labeler}_\beta(P) = \{(x, y) : x \in P, y = a_P(x)\} \subset \CX \times \CY
\]
We say that the labeler is $\beta$-weak if a $\beta$ fraction of these labels are high-quality, i.e., for any input prompt set $P$,
\[
\frac{|\{(x,y) \in {\tt labeler}_\beta(P) : q(x,y) = 1 \}|}{|P|} \geq \beta.
\]
\end{definition}

In our setting, each iteration of data generation is allowed to make one call to the weak labeler. The role of the labeler is to create new responses to a set of prompts. We are not prescriptive about how the labeler is implemented, only that it provides some $\beta$ fraction of high-quality responses. The labeler does not need to indicate \emph{which} prompts have been correctly labeled, nor does it need to correctly label a representative portion of its input. For example, the labeler is allowed to only correctly label the ``easiest'' prompts that it receives as input. We think of these responses as being produced by an exogenous process, such as consulting with a human directly, having a human correct or critique LLM responses, or any other framework for generating responses that are not purely synthetic. 

\subsection{Objective}
Given these capabilities --- the ability to synthesize data, assess synthetic data quality, and weakly label new data ---  the {\bf goal} of our algorithm is to construct datasets $\CD_1, \ldots, \CD_T$ so that
\begin{equation}
\lim_{T \rightarrow \infty} {\textstyle \Pr_{x \sim P, y \sim g_T(x)}}[q(x, y) = 1] = 1 \label{eq:goal}
\end{equation}
where $x \sim P$ denotes that $x$ is chosen uniformly at random from $P$, and $y \sim g_T(x)$ denotes that $y$ is chosen from distribution $g_T(x)$. In other words, as the number of algorithm iterations grows large, the final LLM output by the algorithm returns a correct response to almost every prompt in $P$. Note that this objective is similar to the objective of classical boosting. Rather than use weak learners to construct a good hypothesis, we ask whether \emph{strong learners} and \emph{weak data} can be used to construct a model that provides high-quality results on all prompts. 

\section{Algorithm}
We present an algorithm for learning an LLM from a mixture of synthetically generated and weakly labeled data that uses the capabilities introduced in Section \ref{sec:desiderata}. 

The aforementioned algorithm generates synthetic responses from the last generation of LLM. Synthetic data generation is given multiple opportunities to produce good responses, which are noisily recognized by $\filter_\gamma$. Prompts that are consistently paired with low-quality responses are passed into ${\tt labeler}_\beta$, which provides a minimal amount of signal. A mixture of good synthetically labeled data and $\beta$-weak-labeled data is then incorporated into the training mixture. 
To state this procedure formally, it will be convenient to introduce the $\generate$ subroutine, which issues multiple calls to an LLM per prompt to produce a dataset of synthetically labeled prompts.

\begin{definition}[Generation] \label{defn:generate} The function $\generate$ takes as input a prompt set $P \subseteq \CX$, LLM $g$ and positive integer $k$, and is defined
\[
\generate(P; k, g) = \{(x, y^i_x) : x \in P, i \in [k], y_x^i \sim g(x)\}.
\]
\end{definition}

Algorithm~\ref{alg:boosting} formalizes our procedure for data generation, where generation, filtering, and weak-labeling are applied in sequence on each generation of LLM. Whether the data that is being added to the mixture consists of mostly $\beta$-weakly labeled data ($D_t$ in Algorithm~\ref{alg:boosting}) or $\gamma$-filtered synthetic data ($S^+_t$ in Algorithm~\ref{alg:boosting}) is parameterized by $\alpha > 0$. 

\begin{algorithm*}
\caption{Boosting-style algorithm for LLM training \label{alg:boosting}}
\begin{algorithmic}[1]
\Statex {\bf Given:} Prompt set $P$, repeat parameter $k$, weakly labeled prompt weight $\alpha$, high-quality fraction $\beta$, filter recall $\gamma$, number of iterations $T$.
\State $g_0 = \bot$ and $\CD_0 = \emptyset$ \Comment{Initial LLM and initial training set}
\For{$t = 1, 2, \ldots, T$}
\State $S_t = \generate(P; k, g_{t-1})$ \Comment{Generate $k$ synthetic responses per prompt.}
\State $(S^+_t, P^-_t) = \filter_\gamma(S_t)$ \Comment{\parbox[t]{.5\linewidth}{Noisily partition high-quality synthetic data from low-quality prompts.}}
\State $D_t = {\tt labeler}_{\beta}(P^-_t)$ \Comment{Weakly label low-quality prompts}
\State $\lambda_t = \frac{\alpha}{|D_t|}$ \Comment{Set weight of weakly labeled prompts}
\State $\CD_t = \CD_{t-1} \uplus \lambda_t D_t \uplus S^+_t$ \Comment{Update training mixture}
\State $g_t = {\tt learner}(\CD_t)$ \Comment{Use learner to update LLM}
\EndFor
\end{algorithmic}
\end{algorithm*}

\section{Main result}

Theorem \ref{thm:main} is our main theoretical result, and states that the final LLM $g_T$ output by Algorithm \ref{alg:boosting} satisfies the convergence requirement in Eq.~\eqref{eq:goal}. Theorem \ref{thm:main} also quantifies the rate of convergence. %

\begin{theorem} \label{thm:main} Let $\eps \in (0, 1)$. Suppose that in Algorithm \ref{alg:boosting} we have $\alpha > 0$, $\beta \in (0, 1)$, $\gamma \in (0, 1]$
\[
T \ge \frac{\log(2/\eps)}{\beta} + \frac{2\alpha}{\beta\eps} + 1
\]
and $k \ge ({2\log T + \log |P|})/{(\beta\gamma)}$. With probability at least $1 - 1/T$ over the randomness of the algorithm, the final LLM $g_T$ output by the algorithm satisfies
\[
{\textstyle \Pr_{x \sim P, y \sim g_T(x)}}[q(x, y) = 1]
\ge 1 - \eps.
\]
Note that by setting $\alpha = \eps$ in Algorithm \ref{alg:boosting} the iteration complexity becomes $T = O(\log(1/\eps)/\beta)$. \end{theorem}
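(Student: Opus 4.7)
The plan is a boosting-style amortized analysis that splits $T$ into two phases of lengths $T_1 := \lceil \log(2/\eps)/\beta \rceil$ and $T_2 := \lceil 2\alpha/(\beta\eps) \rceil$, matching the two terms in the bound on $T$. Writing $p_t(x) := \Pr_{y \sim g_t(x)}[q(x,y)=1]$, I would first establish the \emph{good event} $E$: for every $t \in [2,T]$ and every $x \in P$ with $p_{t-1}(x) \ge \beta$, the best-of-$k$ sample at iteration $t$ returns a correct response, so $x \in P_t^+$. Since each such failure has probability at most $(1-\beta)^k \le e^{-\beta k} \le 1/(T^2 |P|)$ by the choice of $k$, a union bound over pairs $(x,t)$ gives $\Pr[E] \ge 1 - 1/T$; the rest of the analysis conditions on $E$. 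By Definition~\ref{defn:learner}, $g_t(y|x) = \CD_t(y|x)$, so $p_t(x) = G_t(x)/W_t(x)$, where $W_t(x)$ is the total weight on prompt $x$ in $\CD_t$ and $G_t(x)$ is the weight of correct labels on $x$. Under $E$, once $p_t(x) \ge \beta$ the prompt stays in $P_s^+$ for all $s > t$, so $p_s(x)$ is non-decreasing and $1 - p_T(x) = B_{t^*(x)}(x)/(W_{t^*(x)}(x) + T - t^*(x))$, where $t^*(x)$ is the first iteration with $p_{t^*}(x) \ge \beta$ and $B_t := W_t - G_t$.

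For Phase~1, I would track $U_t := \{x \in P : G_t(x) = 0\}$ and show by induction that the algorithm stays in a \emph{clean regime} in which $p_t(x) \in \{0\} \cup [\beta, 1]$ for every $x$. In this regime, $P_t^- = U_{t-1}$ (under $E$) and $\lambda_t = \alpha/|U_{t-1}|$, and a short calculation using $W_{t-1}(x) = \sum_{s<t} \lambda_s$ for $x \in U_{t-1}$ shows that a correct weak label at iteration $t$ lifts $p_t(x)$ to $\beta/(1 - (1-\beta)^t) \ge \beta$, preserving the regime. Since the weak labeler must provide correct labels to at least $\beta |U_{t-1}|$ of the prompts in $P_t^- = U_{t-1}$, those prompts leave $U$, yielding $|U_t| \le (1-\beta)|U_{t-1}|$ and hence $|U_{T_1}|/|P| \le (1-\beta)^{T_1} \le e^{-\beta T_1} \le \eps/2$.

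For Phase~2, every $x \in P \setminus U_{T_1}$ has $t^*(x) \le T_1$, so $W_T(x) \ge T - t^*(x) \ge T_2$ and $B_T(x) = B_{t^*(x)}(x) \le B_{T_1}(x)$. Since each of the first $T_1$ iterations injects at most $(1-\beta)\alpha$ of bad weight in aggregate, $\sum_x B_{T_1}(x) \le T_1(1-\beta)\alpha$, and hence
\[
\frac{1}{|P|} \sum_{x \notin U_{T_1}} (1 - p_T(x)) \le \frac{T_1(1-\beta)\alpha}{|P| \, T_2} \le \frac{\eps}{2}
\]
after substituting $T_2 = 2\alpha/(\beta\eps)$ (with constants absorbed into the $+1$ slack in $T$). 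Combining with $|U_{T_1}|/|P| \le \eps/2$ (each such prompt contributing at most $1$ to $\sum(1-p_T)$) yields $(1/|P|)\sum_x (1 - p_T(x)) \le \eps$, as required. I expect the main obstacle to be the clean-regime induction in Phase~1: the identity $p_t(x) = \beta/(1-(1-\beta)^t) \ge \beta$ relies on $\lambda_t = \alpha/|U_{t-1}|$, so one must verify that no prompt can become stranded with $p_t(x) \in (0, \beta)$, which depends delicately on the normalization built into Algorithm~\ref{alg:boosting}.
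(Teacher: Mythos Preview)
Your Phase~1 argument --- the good event $E$, the clean-regime induction showing $p_t(x) \in \{0\} \cup [\beta, 1]$, and the geometric decay $|U_t| \le (1-\beta)|U_{t-1}|$ --- is exactly what the paper establishes in its key lemma, so that part is sound and matches the paper's route.

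The gap is in Phase~2. Your aggregate bound $\sum_x B_{T_1}(x) \le T_1(1-\beta)\alpha$ is correct, but after substituting $T_1 \approx \log(2/\eps)/\beta$ and $T_2 = 2\alpha/(\beta\eps)$ you obtain
\[
\frac{T_1(1-\beta)\alpha}{|P|\,T_2} \;=\; \frac{\log(2/\eps)\,(1-\beta)\,\eps}{2|P|},
\]
which is \emph{not} bounded by $\eps/2$ when $|P| < (1-\beta)\log(2/\eps)$; the ``$+1$ slack'' in $T$ cannot absorb a factor that scales with $\log(1/\eps)$ and is independent of $T$. The paper avoids this by proving a \emph{per-prompt} bound instead of an aggregate one: for each $x$ with $t^*(x) = r \le T_1$, the same geometric decay you proved gives $\lambda_s \le (1-\beta)^{r-s}\lambda_r$, hence $\sum_{s\le r}\lambda_s \le \lambda_r/\beta$, and after using $|P_r^-| \ge 1$ one arrives at
\[
1 - p_T(x) \;\le\; \frac{\alpha(1-\beta)/\beta}{\alpha/\beta + T - r} \;\le\; \frac{\alpha(1-\beta)/\beta}{T - T_1} \;\le\; \frac{(1-\beta)\eps}{2}
\]
uniformly over all $x \notin U_{T_1}$, with no $|P|$ appearing; the two $\eps/2$ pieces then combine multiplicatively. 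Your aggregate route can be salvaged --- for instance by observing that weak labeling ceases once $U_t = \emptyset$, which must happen within $O(\log|P|/\beta)$ rounds, so the effective number of bad-weight rounds is $\min(T_1, O(\log|P|/\beta))$ and $\log|P| \le |P|$ closes the gap --- but the per-prompt argument is what actually delivers the theorem as stated.
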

\begin{proof}[Proof sketch]
The key step in the proof is showing that, with probability $1 - 1/T$, in each iteration $t$ we have $\Pr_{y \sim g_{t-1}(x)}[q(x, y) = 1] \ge \beta$ for all but $(1 - \beta)^{t-1}$ fraction of the prompts $x \in P$. Since the algorithm draws $k = \Omega(1 / (\beta\gamma))$ synthetic responses to each prompt from $g_{t-1}$, one of those responses is likely to be correct. As a result, correctly labeled prompts are continually added to the training data (via the synthetic dataset $S^+_t$), and the quality of the training data steadily improves, causing the performance of the LLMs learned from that training data to approach the optimal performance.

Even when $t$ is large, it is non-trivial to show that $\Pr_{y \sim g_{t-1}(x)}[q(x, y) = 1] \ge \beta$ for nearly all prompts $x \in P$. While this fact follows from our assumption about the weak labeler, it does not follow straightforwardly. The weak labeler ensures that the \emph{average} response quality to a given set of prompts is at least $\beta$, but we need a guarantee about response quality that holds \emph{uniformly} for almost all prompts. Our approach is to first show that $P^-_t$ (the set of prompts with low-quality responses) shrinks exponentially with $t$, and then observe that the total weight assigned to these prompts in the training data is fixed at $\alpha > 0$ (a free parameter of our algorithm). Consequently, once a prompt is assigned a high-quality response by the weak labeler, the weight of that response overwhelms the weight of all previous low-quality responses in the training data. So when the learner fits an LLM to this training data, the LLM assigns non-trivial probability mass to the high-quality response; we are able to bound this probability from below by $\beta$. 
\end{proof}

\subsection{Relationship to Boosting}

Boosting is a meta-learning algorithm for combining weak hypotheses into highly accurate ensemble classifiers \citep{schapire2013boosting}. While the most common version of boosting is AdaBoost \citep{freund1997decision}, we will present a slightly simpler version that still contains all of the essential ideas.

In each iteration of boosting, a training set of binary-labeled examples is given as input to a \emph{weak learner}. Each training example is associated with a non-negative weight, and the weights sum to 1. The weak learner returns a hypothesis that achieves weighted error at most $\frac12 - \beta$ on the training set, where $\beta \in (0, \frac12)$ is the \emph{edge} over the trivial hypothesis that randomly guesses each label. The weight on each training example that is correctly labeled by the hypothesis is decreased by a factor $\exp(\theta)$, and the weight on each training example that is incorrectly labeled by the hypothesis is increased by the same factor, where $\theta = \frac12 \log\frac{1 + 2\beta}{1 - 2\beta}$. Essentially, the weights are adjusted to concentrate on difficult examples. The weights are renormalized to sum to 1, and the process repeats. After $T = O(\log(1/\eps)/\beta^2)$ iterations, a majority vote among all the hypotheses achieves unweighted error most $\eps$ on the training set.

Comparing Algorithm \ref{alg:boosting} to the description of boosting given above reveals many similarities. In each iteration of Algorithm \ref{alg:boosting}, prompts are given as input to a weak labeler that has quality $\beta \in (0, 1)$, where $\beta$ is the edge over the trivial labeler that assigns an incorrect response to every prompt. The weight on each prompt that is correctly labeled by the previous iteration's LLM is set to zero, and the weight on each prompt that is incorrectly labeled by the previous iteration's LLM is increased by at least a factor $\exp(\theta)$, where $\theta = \log \frac{1}{1 - \beta}$ (this fact emerges from our analysis, which proves that size of the set of prompts given to the weak labeler shrinks by a factor at least $1 - \beta$ each iteration; see Lemma \ref{lem:main}(b) in the Appendix). As in boosting, the weights are adjusted to concentrate on difficult examples. After $T = O(\log(1/\eps)/\beta)$ iterations, an LLM learned from all of the training data achieves error at most $\eps$ on the overall prompt set (see Theorem \ref{thm:main}).

\section{Experiments}

Viewing Algorithm \ref{alg:boosting} as a meta-algorithm, we conduct experiments with specific instantiations using Gemma 2 2B on math problem solving \citep[GSM8K]{cobbe2021training} and Python coding \citep[MBPP]{austin2021program} tasks.\footnote{Gemma models are made available under Google's Gemma Terms of Use. GSM8K and MBPP are made available under the MIT License.}
We select these tasks because measures of response quality here are consistent and easily verifiable. %

\subsection{Instantiations of Algorithm \ref{alg:boosting}}\label{sec:instantiations}

\begin{figure*}[t]
  \centering
  \includegraphics[width=0.45\textwidth, trim=0cm 0.6cm 0.0cm 0.2cm, clip]{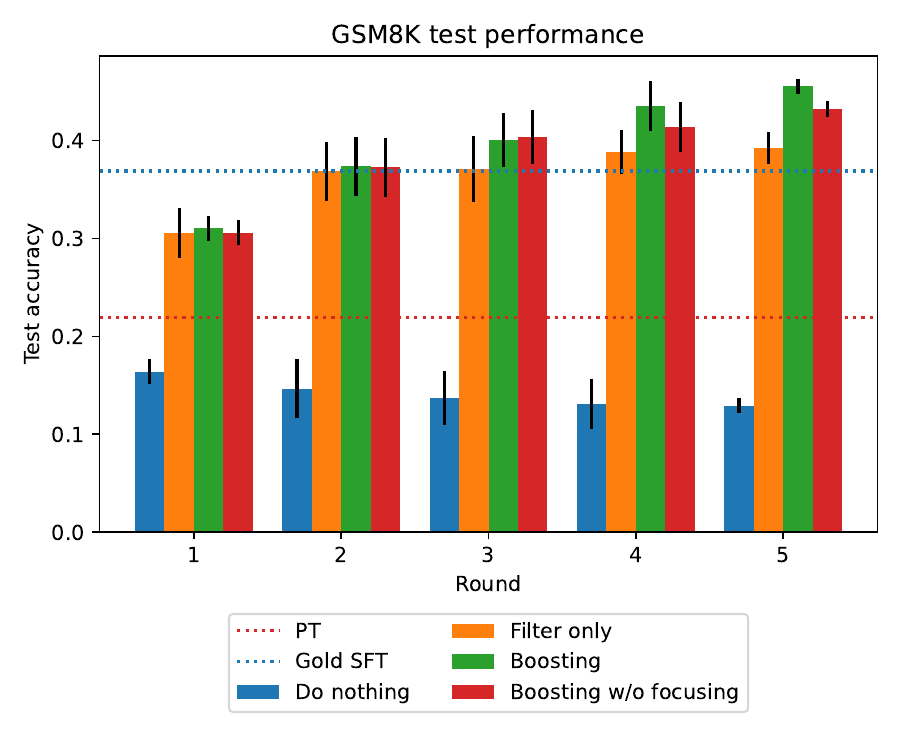} 
  \includegraphics[width=0.45\textwidth, trim=0cm 0.6cm 0cm 0.2cm, clip]{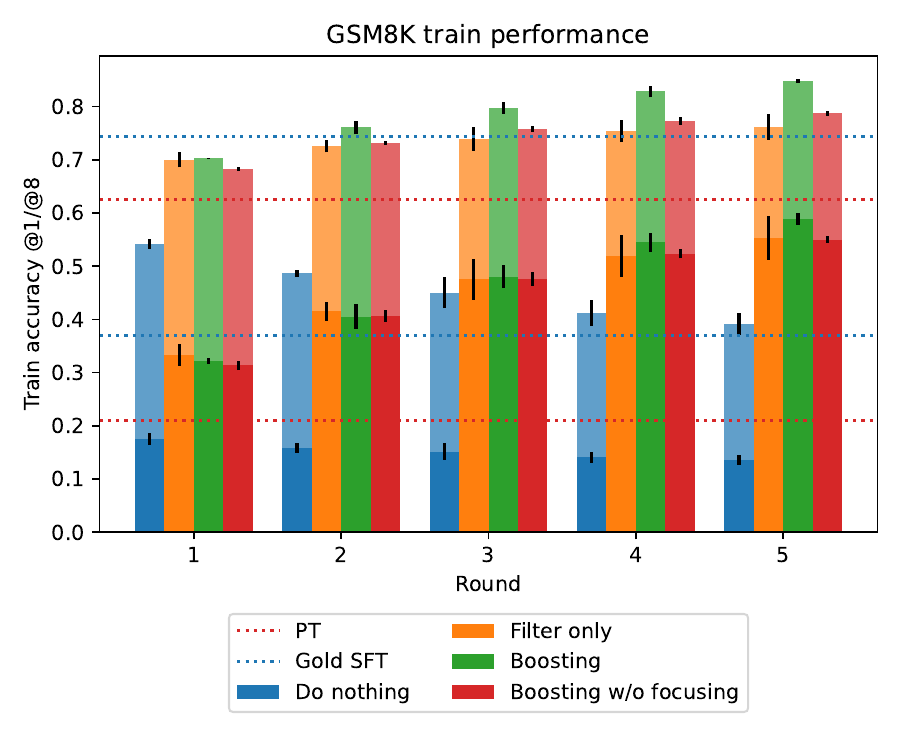} 
  \caption{We plot test and train performance of our Algorithm 2 variants on GSM8K, across rounds. We report the mean and \texttt{np.std(*,ddof=1)} for 3 seeds. For train accuracy plots, we plot both train accuracy@1 (solid) and train accuracy@8 (stacked). Boosting results displayed here use \emph{weak data (A)}.}
  \label{fig:gsm8k}
\end{figure*}

\begin{figure*}[h]
  \centering
  \includegraphics[width=0.45\textwidth, trim=0cm 0.6cm 0.0cm 0.2cm, clip]{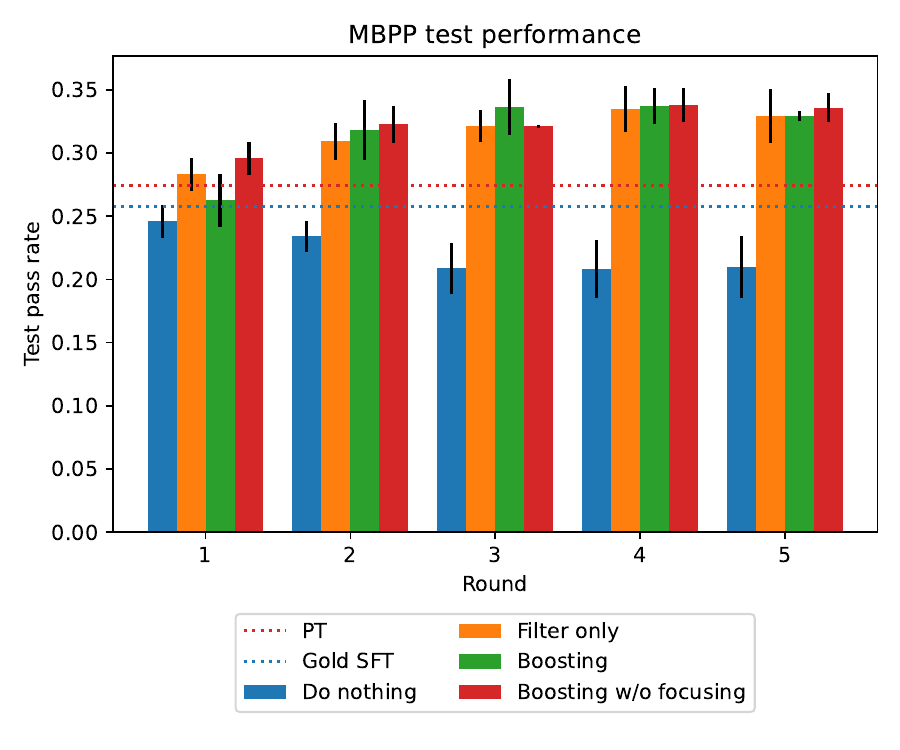} 
  \includegraphics[width=0.45\textwidth, trim=0cm 0.6cm 0cm 0.2cm, clip]{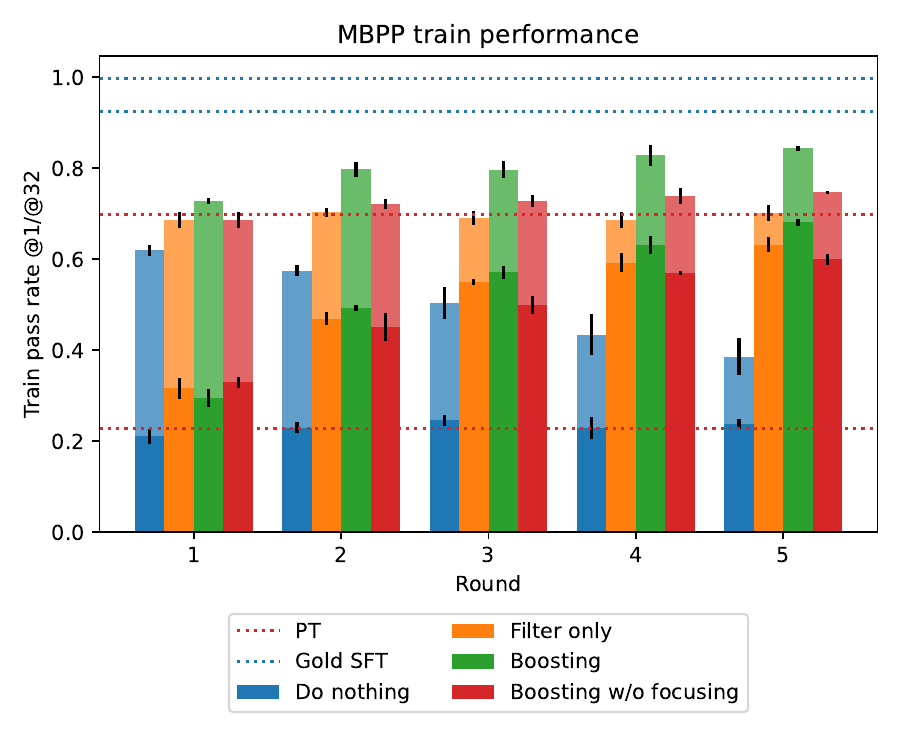} 
  \caption{We plot test and train performance of our Algorithm 2 variants on MBPP, across rounds. We report the mean and \texttt{np.std(*,ddof=1)} for 3 seeds. For train pass rate plots, we plot both train pass@1 (solid) and train pass@32 (stacked). Boosting results displayed here use \emph{weak data (A)}.}
  \label{fig:mbpp}
\end{figure*}

\paragraph{Do nothing.} Responses produced by the current iteration of the model are directly used as training data for the next iteration. This corresponds to setting $\alpha=0$, omitting the $\generate$ operation, and using a pass-through $\filter$ in line 4 of Algorithm \ref{alg:boosting}. This tracks the setting explored in the ``model collapse'' literature \citep{alemohammad2024selfconsuming,shumailov2024ai, gerstgrasser2024is}.

\paragraph{Filter only.} Only correct responses in the current iteration are used for training in the next iteration. This corresponds to $\alpha=0$ and $\gamma = 1$ in Algorithm \ref{alg:boosting}. This reproduces the STaR/ReST approaches for learning from synthetic data \citep{zelikman2022star, gulcehre2023rest, singh2024beyond}.

\paragraph{Boosting.} The full algorithm of the present paper. In addition to the synthetic data produced by \emph{Filter only}, we mix in weak data from the labeler. This corresponds to $\alpha>0$ and $\beta>0$ in Algorithm \ref{alg:boosting}. We use $\alpha=1/3$ in all experiments.
\begin{itemize}
    \item \textbf{Boosting, w/o focusing}. We ablate out focusing on hard examples. To be precise: rather than giving the labeler the prompts we got wrong, $P_t^-$, we draw a random set of questions of size $|P_t^-|$.
\end{itemize}

We also report two baselines that do not involve iteratively training on model-generated data. \textbf{PT}: the pre-trained model; and \textbf{Gold SFT}: the model after one round of fine-tuning on the human-written responses in the dataset. Note that \emph{Gold SFT} is the only setup that makes use of human-written responses, rather than just for answer verification.

\subsection{Experimental Details}

In all experiments, a round of fine-tuning entails training all parameters of the model for 330 (GSM8K) or 30 (MBPP) steps at batch size 64 (with the exception of  \emph{Gold SFT} where we report the checkpoint with best validation accuracy)  We train with standard sequence cross-entropy loss. Training examples are ({\tt input}, {\tt target}) pairs, where {\tt input} is the problem preceded by a 3-shot prompt (see Appendix \ref{sec:prompts} for prompt templates); and {\tt target} is a model response (human-written response for \emph{Gold SFT}).

\paragraph{Modeling the weak data.} We instantiate ${\tt labeler}$ as a Gemma 2 2B PT model with a fixed \emph{total query budget}, which is distributed uniformly over all problems it receives. For a given problem, we sample responses from the model equal to that problem's allotted queries. We consider two setups to simulate weak data provided by the weak labeler.
\begin{itemize}
\item \textbf{Weak data (A)}: for each question, we return all correct responses if there are any. If there are none, we return a random incorrect response. 
\item \textbf{Weaker data (B)}: we pool together the correct responses to all questions. We add to this collection an equal number of incorrect responses, drawn randomly from all incorrect responses to all questions.
\end{itemize}

We remark that the fixed total query budget setup offers a mechanism for satisfying the weak data assumption: the labeler can maintain constant accuracy when targeting increasingly granular (and more difficult) slices of the input distribution by focusing their resources. We see that this is indeed the case experimentally, and plot accuracies in Figure \ref{fig:labeler}. Moreover, a fixed query budget is a natural analogue to the fixed person-hours/money/compute budgets behind a labelling effort.

\begin{table}[t!]
  \small
  \centering
  \begin{tabular}{l c l l l l l l}
    \toprule
    \multicolumn{2}{c}{} & 
    \multicolumn{3}{c}{GSM8K} &
    \multicolumn{3}{c}{MBPP} \\
    \cmidrule(lr){3-5}
    \cmidrule(lr){6-8}
    \multirow{2}{*}{Setup} & 
    \multirow{2}{*}{Rounds} & \multicolumn{2}{c}{train} & test & 
    \multicolumn{2}{c}{train} & test \\
    \cmidrule(lr){3-4} \cmidrule(lr){5-5} \cmidrule(lr){6-7} \cmidrule(lr){8-8}
    & & @1 & @8 & greedy & @1 & @32 & greedy\\
    \midrule
    PT & \multirow{2}{*}{0}  & $.211_{.002}$ & $.630_{.007}$ & $.222_{.003}$ & $.235_{.008}$ & $.703_{.007}$ & $.275_{.002}$ \\
    \hspace{4pt} - orig. report &  & - & - & .243* & - & - & .302* \\
    \cmidrule{2-8}
    Gold SFT & 1 & $.392_{.020}$ & $.755_{.009}$ & $.379_{.008}$ & $.880_{.039}$ & $.987_{.009}$ & $.237_{.018}$ \\
    \midrule
    Do nothing & 5 & $.136_{.012}$ & $.391_{.026}$ & $.129_{.022}$ & $.238_{.011}$ & $.386_{.041}$ &  $.210_{.024}$ \\
    \cmidrule{2-8}
    Filter only & 5 & $.553_{.050}$ & $.762_{.029}$ & $.393_{.020}$ & $.632_{.016}$ & $.702_{.018}$ & $.329_{.022}$ \\
    \cmidrule{2-8}
    Boosting \textbf{(A)}& \multirow{2}{*}{5}  & $.589_{.013}$ & $.849_{.005}$ & $.456_{.010}$ & $.681_{.008}$ & $.844_{.005}$ & $.329_{.004}$ \\
    \hspace{4pt}- w/o focusing &  & $.550_{.008}$ & $.787_{.005}$ & $.432_{.013}$
 & $.600_{.012}$ & $.747_{.003}$ & $.336_{.011}$ \\
    \cmidrule{2-8}
    Boosting \textbf{(B)}& \multirow{2}{*}{5}  & $.565_{.020}$ & $.820_{.009}$ & $.443_{.020}$ & $.647_{.012}$ & $.832_{.009}$ & $.326_{.009}$ \\
    \hspace{4pt}- w/o focusing &  & $.509_{.000}$ & $.767_{.007}$ & $.430_{.012}$ & $.544_{.012}$ & $.698_{.010}$ & $.327_{.009}$ \\
    \bottomrule
  \end{tabular}
  \vspace{10pt}
  \caption{Comparison of 3-shot train and test accuracy@$k$ rates on GSM8K and MBPP for Gemma 2 2B checkpoints produced by various setups. We report the mean and \texttt{np.std(*, ddof=1)} for 3 seeds. To report train accuracy@$k$, we sample $k$ solutions to each problem at temperature 0.7 and mark it correct if any of $k$ solutions is correct. For test accuracy, we employ greedy sampling. \textbf{(*)}: Row 2 cites the figure from the Gemma 2 report \citep{gemmateam2024gemma2improvingopen} which does not report sampling temperature.} 
  \label{tab:gsm8k-mbpp-results}
\end{table}

\paragraph{Departure from the theory.} In our experiments, we make one main modification from Algorithm \ref{alg:boosting}. Rather than accumulating \emph{data} and retraining the model each iteration (Algorithm \ref{alg:boosting}, line 9), we instead accumulate \emph{updates}. That is, we fine-tune on the newly introduced data in each iteration, initializing from the checkpoint produced by the prior iteration. We do this for efficiency reasons.

\subsection{GSM8K Results}
Table \ref{tab:gsm8k-mbpp-results} (first half) summarizes our results on GSM8K. We have 7000 training problems, use $k=8$ for $\generate$, and allocate the same total query budget of 56,000 to the labeler each round. In Appendix \ref{sec:qualitative-analysis}, we present model responses to selected problems over the course of training.

\paragraph{Baselines validate our experimental setup.} Results in the \emph{PT} and \emph{Gold SFT} demonstrate that: (1) our evaluation setup is in the ballpark of what is reported in the original Gemma 2 report; and (2) our fine-tuning setup indeed can yield significant improvement when the training data is human-written solutions.

\paragraph{Model collapse with no curation.} In the \emph{Do nothing} row, we recover the result from the model collapse literature that iterative fine-tuning without curation does not improve the model and leads to degraded quality.

\paragraph{Comparison between curation variants.} Indeed, the present algorithm demonstrates improvements over the ReST-like variant that uses filtering only. The differences are most evident in training accuracy, which is strongly predicted by the theory. Indeed, this is in spite of the fact that \emph{as opposed to filtering only, boosting introduces incorrect answers to the training data}. Furthermore although our theory does not address generalization, we observe that boosting results in improved test accuracy. Finally, the performance of boosting without focusing is quite close -- random selection is a strong baseline -- but focusing still leads to improvements, especially in terms of training accuracy. 

\subsection{MBPP Results}

Table \ref{tab:gsm8k-mbpp-results} (second half) summarizes our results on MBPP. We have 374 training problems, use $k=32$ for $\generate$, and allocate the same total query budget of 11,968 to the weak labeler in each round.

\paragraph{Similar results to GSM8K for train pass rate.} In terms of train pass@$k$, we observe similar results to GSM8K experiments, that generally: \emph{Boosting} $>$ \emph{Boosting w/o focusing} $>$ \emph{Filter only} $>$ \emph{Do nothing}. On \emph{weaker data (B)}, \emph{Filtering} beats \emph{Boosting w/o focusing} in terms of pass@1. 

\paragraph{No clear winner for test pass rate.} While all iterative approaches outperform \emph{Gold SFT} in terms of test pass rate, they all recover similar test performance despite differences in training accuracy. Notably, \emph{Boosting w/o focusing} beats \emph{Boosting}, and \emph{Filter Only} outperforms \emph{Boosting} with weaker data (B). One explanation is the limited amount of training data (384 examples) which prevents generalization; note that \emph{Gold SFT} does not recover \emph{PT} test pass rate.

\section{Conclusion \& Future Work}

We have shown that under mild assumptions a modicum of curation applied to synthetic data not only avoids model collapse, but leads to arbitrarily high accuracy results. Our analysis is through the lens of boosting and, mirroring that paradigm, we define notions of {\em strong learners} and {\em weak data} to reach the theoretical conclusions. 
In taking this view, we provide theoretical explanations for many of the synthetic data methods used in practice. 

Many interesting questions remain. An immediate avenue is further relaxing the assumptions (e.g., having nearly strong learners that only approximately match the conditional distribution) and deriving corresponding convergence rates. A broader goal is using these insights for the burgeoning field of data selection, where we must explicitly model similarities between different examples as part of the analysis.

\section*{Author contributions}\label{sec:contributions}

\begin{itemize}
    \item {\bf Sergei V} conceived the idea to analyze synthetic data training through the lens of boosting.
    \item {\bf Sara B} and {\bf Alex B} reviewed related work.
    \item {\bf Everyone} developed the modeling framework.
    \item {\bf Kareem A}, {\bf Weiwei K}, {\bf Umar S} and {\bf Sergei V} proved the main result.
    \item {\bf Alex B} designed and ran the experiments.
    \item {\bf Everyone} contributed to writing the paper and framing its contributions.
\end{itemize}

\bibliographystyle{plainnat}
\bibliography{main}

\newpage
\appendix
\onecolumn

\allowdisplaybreaks
\section{Theoretical Analysis}
\subsection{Proof of Theorem \ref{thm:main}}
\label{sec:proof}

Throughout the proof, we will write $S$ to denote a dataset where all of the labels were generated synthetically (\emph{i.e.}, by an LLM), $D$ to denote a dataset where all of the labels were provided by ${\tt labeler}_\beta$, and $\CD$ to denote a dataset containing a mixture of these kinds of data. Also, only datasets denoted by $\CD$ will contain elements whose weights can differ from $0$ and $1$. All other datasets will be ordinary sets.

We adopt a few simplifying assumptions and conventions. Assume that the given prompt set $P$ is non-empty. Assume that the initial LLM $g_0$ returns an incorrect response to every prompt. Assume that any LLM returns at most one correct response to any prompt. Removing the latter pair of assumptions would only speed up the convergence of Algorithm \ref{alg:boosting} to an optimal LLM, but would also further complicate its analysis. Finally, we adopt the convention that $\infty \cdot 0 = 0$. This convention is needed when Algorithm \ref{alg:boosting} constructs $\CD_t$ via the weighted union operation, since it can happen that $\lambda_t = \infty$, but this only occurs when $D_t$ is empty.

Let $P^+_t = \{x \in P : (x, y) \in S^+_t\}$ be the correct prompts selected by $\filter$. By definition $P^+_t$ and $P^-_t$ form a partition of $P$. %
Furthermore, $D_t$ pairs each prompt in $P^-_t$ with the label it was assigned by ${\tt labeler}_\beta$, and $S^+_t$ pairs each prompt in $P^+_t$ with the (synthetic) label it was assigned by the previous iteration's LLM, $g_{t-1}$. Observe that $P^-_t(x) = \sum_y D_t(x, y)$ and $P^+_t(x) = \sum_y S^+_t(x, y)$. For all $t \ge 1$ and $x \in P$ let
\begin{align*}
q_t(x) &= \sum_y D_t(y | x) q(x, y)\\
q^+_t(x) &= \sum_y S^+_t(y | x) q(x, y)\\
\baq_t(x) &= \sum_y \CD_t(y | x) q(x, y)
\end{align*}
be the average quality of the responses to prompt $x$ in datasets $D_t$, $S^+_t$ and $\CD_t$, respectively. For convenience we also define $\baq_0(x) = \E_{y \sim g_0(x)}[q(x, y)]$. Note that $\baq_0(x) = 0$ for all $x \in \CX$ by assumption.

\begin{lemma} \label{lem:quality} For all $t \ge 1$ and $x \in P$
\[
\baq_t(x) = \frac{\sum_{s=1}^t \lambda_s P^-_s(x)q_s(x) + P^+_s(x)q^+_s(x)}{\sum_{s=1}^t \lambda_s P^-_s(x) + P^+_s(x)}.
\]\end{lemma}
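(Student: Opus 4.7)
The proof is essentially a direct computation from the definitions, so the plan is to unfold everything carefully and collect terms.

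The first step is to unfold the recurrence $\CD_t = \CD_{t-1} \uplus \lambda_t D_t \uplus S^+_t$ (with $\CD_0 = \emptyset$) to obtain the closed form
\[
\CD_t(x,y) = \sum_{s=1}^t \bigl(\lambda_s D_s(x,y) + S^+_s(x,y)\bigr),
\]
which follows immediately by induction on $t$ from the definition of the weighted union operation. Summing over $y$ and using the identities $P^-_s(x) = \sum_y D_s(x,y)$ and $P^+_s(x) = \sum_y S^+_s(x,y)$ noted in the preamble to the lemma gives the denominator
\[
\sum_y \CD_t(x,y) = \sum_{s=1}^t \bigl(\lambda_s P^-_s(x) + P^+_s(x)\bigr).
\]

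Next, I would expand the numerator of $\baq_t(x)$. By the definition $\baq_t(x) = \sum_y \CD_t(y|x) q(x,y)$ and the definition of $\CD_t(y|x)$, this equals $\sum_y \CD_t(x,y)q(x,y)$ divided by the denominator above. Plugging in the closed form of $\CD_t(x,y)$,
\[
\sum_y \CD_t(x,y) q(x,y) = \sum_{s=1}^t \Bigl(\lambda_s \sum_y D_s(x,y)q(x,y) + \sum_y S^+_s(x,y)q(x,y)\Bigr).
\]
Now factor out $P^-_s(x)$ from the first inner sum: since $D_s(y|x) = D_s(x,y)/P^-_s(x)$ when $P^-_s(x) > 0$, we have $\sum_y D_s(x,y)q(x,y) = P^-_s(x)\,q_s(x)$, where this identity also holds trivially when $P^-_s(x) = 0$ (both sides vanish). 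The analogous identity $\sum_y S^+_s(x,y)q(x,y) = P^+_s(x)\,q^+_s(x)$ holds for the same reason. Substituting both into the display above yields exactly the numerator in the claimed formula, and dividing by the denominator gives the lemma.

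The only subtlety is a bookkeeping one: when $D_s$ is empty we have $\lambda_s = \alpha/|D_s| = \infty$, but in that case $P^-_s(x) = 0$ for every $x$, so the convention $\infty \cdot 0 = 0$ announced at the start of the appendix makes $\lambda_s P^-_s(x) = 0$ and the formula remains well-defined. I would note this briefly and otherwise treat the argument as a routine rearrangement. There is no real obstacle here — the content of the lemma is purely a restatement of $\baq_t$ in aggregated form, useful for later inductive arguments about how $\baq_t(x)$ evolves.
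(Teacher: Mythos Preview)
Your proof is correct and follows essentially the same route as the paper's: both unfold $\CD_t$ into $\sum_{s=1}^t(\lambda_s D_s + S^+_s)$, use the identities $P^-_s(x)=\sum_y D_s(x,y)$ and $P^+_s(x)=\sum_y S^+_s(x,y)$ for the denominator, and then factor $D_s(x,y)=P^-_s(x)D_s(y|x)$ (and similarly for $S^+_s$) to recover $q_s(x)$ and $q^+_s(x)$ in the numerator. Your explicit remark about the $\infty\cdot 0$ convention when $D_s=\emptyset$ is a welcome addition that the paper's proof leaves implicit.
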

\begin{proof} We have
\begin{align*} 
    \CD_t(y | x) &= \frac{\CD_t(x, y)}{\sum_{y'} \CD_t(x, y')}\\
                  &= \frac{\sum_{s=1}^t \lambda_s D_s(x, y) + S^+_s(x, y)}{\sum_{y'} \sum_{s=1}^t \lambda_s D_s(x, y') + S^+_s(x, y')}\\
                  &= \frac{\sum_{s=1}^t \lambda_s D_s(x, y) + S^+_s(x, y)}{\sum_{s=1}^t \lambda_s P^-_s(x) + P^+_s(x)}\\
                  &= \frac{\sum_{s=1}^t \lambda_s P^-_s(x)D_s(y | x) + P^+_s(x)S^+_s(y | x)}{\sum_{s=1}^t \lambda_s P^-_s(x) + P^+_s(x)}
\end{align*}
and therefore
\begin{align*}
    \baq_t(x) &= \sum_y \CD_t(y | x) q(x, y)\\ 
              &= \frac{\sum_y \sum_{s=1}^t \lambda_s P^-_s(x)D_s(y | x)q(x, y) + P^+_s(x)S^+_s(y | x)q(x, y)}{\sum_{s=1}^t \lambda_s P^-_s(x) + P^+_s(x)}\\
              &= \frac{\sum_{s=1}^t \lambda_s P^-_s(x)q_s(x) + P^+_s(x)q^+_s(x)}{\sum_{s=1}^t \lambda_s P^-_s(x) + P^+_s(x)} \qedhere
\end{align*}
\end{proof}

\begin{lemma} \label{lem:zeroes} For all $t \ge 1$ and $x \in P$ we have $\baq_t(x) = 0$ if and only if $\baq_{t-1}(x) = 0$ and $q_t(x) = 0$.\end{lemma}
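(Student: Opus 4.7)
The plan is to apply Lemma~\ref{lem:quality} to get an explicit formula for $\baq_t(x)$, then exploit the fact that every label in $S^+_s$ has quality $1$ by construction. Concretely, since ${\tt filter}$ only retains $(x,y)$ with $q(x,y)=1$, for any $x\in P^+_s$ every response in $S^+_s$ for $x$ has quality $1$, so $q^+_s(x)=1$ whenever $P^+_s(x)>0$, and the numerator of Lemma~\ref{lem:quality} simplifies to $\sum_{s=1}^t \lambda_s P^-_s(x) q_s(x) + P^+_s(x)$. Since $P^-_s$ and $P^+_s$ partition $P$, for every $x\in P$ and every $s$ we have $\lambda_s P^-_s(x)+P^+_s(x)>0$ (with the $\infty\cdot 0=0$ convention handling the case $|D_s|=0$), so the denominator is strictly positive and $\baq_t(x)=0$ iff the numerator vanishes. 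Because every term in that sum is nonnegative, this is equivalent to requiring, for every $s\le t$, both $P^+_s(x)=0$ and $q_s(x)=0$ (the latter because $P^+_s(x)=0$ forces $P^-_s(x)=1$, so $\lambda_s P^-_s(x) q_s(x)=\lambda_s q_s(x)$ with $\lambda_s>0$).

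Given this characterization, the forward direction is immediate: the conditions "$P^+_s(x)=0$ and $q_s(x)=0$ for all $s\le t$" trivially imply the corresponding conditions for all $s\le t-1$ (hence $\baq_{t-1}(x)=0$ by the same characterization, using $\baq_0(x)=0$ for the $t=1$ base case) and also $q_t(x)=0$. For the reverse direction, assume $\baq_{t-1}(x)=0$ and $q_t(x)=0$. The characterization applied at $t-1$ already gives $P^+_s(x)=0$ and $q_s(x)=0$ for all $s\le t-1$, so the only remaining piece is to show $P^+_t(x)=0$, which I combine with $q_t(x)=0$ to conclude.

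The main obstacle, and the only non-routine step, is precisely this implication: $\baq_{t-1}(x)=0 \Rightarrow P^+_t(x)=0$. This is where the strong learner assumption enters. For $t\ge 2$ and $x\in P$, $x$ appears in $\CD_{t-1}$ (since $P^-_{t-1}(x)+P^+_{t-1}(x)=1$), so Definition~\ref{defn:learner} pins down $g_{t-1}(\cdot\mid x)$ to equal $\CD_{t-1}(\cdot\mid x)$ exactly. Hence $\E_{y\sim g_{t-1}(x)}[q(x,y)] = \baq_{t-1}(x) = 0$, meaning $g_{t-1}$ assigns zero mass to every quality-$1$ response for $x$. Consequently the $k$ samples drawn by $\bestof$ are all incorrect with probability $1$, so $x\in P^-_t$ and $P^+_t(x)=0$. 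For the $t=1$ base case, $\baq_0(x)=0$ holds by the assumption that $g_0$ returns an incorrect response with probability $1$, and the same argument (using $g_0$ in place of $g_{t-1}$) yields $P^+_1(x)=0$, so $\baq_1(x)=0$ iff $q_1(x)=0$, which matches the lemma. Combining $P^+_t(x)=0$ with the hypothesis $q_t(x)=0$ makes the $s=t$ summand vanish, completing the numerator-is-zero argument and hence the backward direction.
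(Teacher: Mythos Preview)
Your proof is correct and follows essentially the same approach as the paper: both arguments expand $\baq_t(x)$ via Lemma~\ref{lem:quality}, use nonnegativity of the summands to reduce $\baq_t(x)=0$ to every term vanishing, and use the strong learner assumption to show that $\baq_{t-1}(x)=0$ forces $x\in P^-_t$ (so $P^+_t(x)=0$). Your packaging as an explicit characterization ``$\baq_t(x)=0 \iff P^+_s(x)=0$ and $q_s(x)=0$ for all $s\le t$'' is slightly cleaner in that the forward direction no longer needs the strong-learner step, but this is a presentational difference rather than a different route.
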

\begin{proof}

Suppose $\baq_t(x) = 0$. By Lemma \ref{lem:quality} this implies $\lambda_s P^-_s(x) q_s(x) + P^+_s(x)q^+_s(x) = 0$ for $s \in \{1, \ldots, t\}$, and therefore $\baq_{t-1}(x) = 0$. This implies that $x$ cannot be correctly labeled in $S_t$, and therefore $P^-_t(x) = 1$. Since $\alpha > 0$ we have $\lambda_t > 0$. And since $\lambda_t P^-_t(x) q_t(x) = 0$ we must have $q_t(x) = 0$.

Now suppose $\baq_{t-1}(x) = 0$ and $q_t(x) = 0$. Since $\baq_{t-1}(x) = 0$ then again by Lemma \ref{lem:quality} we have $\lambda_s P^-_s(x)q_s(x) +  P^+_s(x)q^+_s(x) = 0$ for $s \in \{1, \ldots, t-1\}$. The fact that $\baq_{t-1}(x) = 0$ also implies that $x$ cannot be correctly labeled in $S_t$, and therefore $P^+_t(x) = 0$. And since $q_t(x) = 0$ we have $\lambda_t P^-_t(x)q_t(x) +  P^+_t(x)q^+_t(x) = 0$, which implies $\baq_t(x) = 0$.
\end{proof}

\begin{lemma} \label{lem:zeroone} Let $t \ge 1$ and $x \in P$. If $x \in P^-_t$ then $q_t(x) \in \{0, 1\}$. If $x \in P^+_t$ then $q^+_t(x) = 1$.\end{lemma}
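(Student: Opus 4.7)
The plan is to observe that this lemma is essentially a bookkeeping statement: both $D_t$ and $S^+_t$ assign each relevant prompt a single label, so the conditional distributions $D_t(\cdot \mid x)$ and $S^+_t(\cdot \mid x)$ are point masses, and the averages $q_t(x)$ and $q^+_t(x)$ collapse to a single evaluation of $q$.

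First I would handle the claim about $q_t(x)$. Since $D_t = {\tt labeler}_\beta(P^-_t)$ is formed by applying the auxiliary map $a_{P^-_t}$ (Definition \ref{defn:labeler}), for every $x \in P^-_t$ the dataset $D_t$ contains exactly one pair $(x, a_{P^-_t}(x))$. Therefore $D_t(a_{P^-_t}(x) \mid x) = 1$ and $D_t(y \mid x) = 0$ for any other $y$, giving
\[
q_t(x) \;=\; \sum_y D_t(y \mid x)\, q(x, y) \;=\; q\bigl(x, a_{P^-_t}(x)\bigr) \;\in\; \{0,1\},
\]
since $q$ is $\{0,1\}$-valued by Definition~\ref{defn:bestof}.

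Next I would handle the claim about $q^+_t(x)$. By Definition~\ref{defn:bestof}, $S_t = \bestof(P; k, g_{t-1})$ pairs each prompt $x \in P$ with exactly one synthetic response, call it $y^*_x$. The filter of Definition~\ref{defn:filter} then keeps $(x, y^*_x) \in S^+_t$ precisely when $q(x, y^*_x) = 1$, which by definition is exactly the condition $x \in P^+_t$. Hence for any $x \in P^+_t$, we have $S^+_t(y^*_x \mid x) = 1$ and $S^+_t(y \mid x) = 0$ for all other $y$, so
\[
q^+_t(x) \;=\; \sum_y S^+_t(y \mid x)\, q(x, y) \;=\; q(x, y^*_x) \;=\; 1.
\]

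I do not expect any real obstacle here: the content of the lemma is purely the unfolding of definitions, and the only thing to be careful about is to note that the multiplicity of $x$ in both $D_t$ and $S^+_t$ is exactly one (no weighting $\lambda_s$ is in play, since these are the per-round datasets built before the weighted union forming $\CD_t$).
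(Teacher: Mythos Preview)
Your proof is correct and follows the same approach as the paper's, which likewise just unfolds the definitions of ${\tt labeler}_\beta$ and ${\tt filter}$ to observe that $D_t$ and $S^+_t$ are singly labeled and that $S^+_t$ contains only correctly labeled prompts. One cosmetic slip: the fact that $q$ is $\{0,1\}$-valued comes from the Quality definition, not from Definition~\ref{defn:bestof} (Best-of-$k$).
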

\begin{proof} Note that $D_t$ contains each prompt only once (by Definition \ref{defn:labeler} of ${\tt labeler}_\beta$), and $S^+_t$ contains only correctly labeled prompts (by Definition \ref{defn:filter} of  $\filter$). The lemma follows from the definitions of $P^-_t, P^+_t$, $q_t(x)$ and $q^+_t(x)$.
\end{proof}

\begin{lemma} \label{lem:tech} If $a, b, c, d \ge 0$ satisfy $a \le b$, $c \ge d$ and $b > 0$ then
\[
\frac{a + c}{b + c} \ge \frac{a + d}{b + d}.
\]\end{lemma}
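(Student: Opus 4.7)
The statement is a purely algebraic monotonicity fact, so I would prove it by a one-variable monotonicity argument rather than a brute force inequality manipulation. Specifically, I would define $f(t) = \frac{a + t}{b + t}$ for $t \ge 0$ and show that $f$ is non-decreasing on $[0, \infty)$ under the hypothesis $a \le b$. The key observation is that $b > 0$ together with $t \ge 0$ guarantees $b + t > 0$, so $f$ is well-defined and differentiable, with
\[
f'(t) = \frac{(b+t) - (a+t)}{(b+t)^2} = \frac{b - a}{(b+t)^2} \ge 0,
\]
where the inequality uses $a \le b$. Applying monotonicity to $c \ge d \ge 0$ yields $f(c) \ge f(d)$, which is exactly the claim.

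Alternatively, and perhaps more elementary (avoiding calculus), I would cross-multiply. Since $b > 0$ and $c, d \ge 0$, both denominators $b + c$ and $b + d$ are strictly positive, so the target inequality is equivalent to $(a + c)(b + d) \ge (a + d)(b + c)$. Expanding both sides and canceling the common terms $ab$ and $cd$ reduces this to $ad + bc \ge ac + bd$, i.e., $(b - a)(c - d) \ge 0$, which is immediate from the hypotheses $a \le b$ and $c \ge d$. There is essentially no obstacle here: the only subtlety is verifying that the denominators are strictly positive so that cross-multiplication preserves the inequality direction, and this follows from $b > 0$ and $c, d \ge 0$. I would present whichever of these two derivations is more aligned with the rest of the appendix's style; the cross-multiplication proof is slightly shorter and self-contained, so I would likely go with that.
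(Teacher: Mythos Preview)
Your proposal is correct and your cross-multiplication argument is essentially the paper's own proof: the paper also clears denominators (justified by $b > 0$), expands, and reduces to $b(c-d) \ge a(c-d)$, handling $c=d$ as a trivial separate case, whereas you absorb both cases into the single factored inequality $(b-a)(c-d) \ge 0$. Your derivative-based monotonicity argument is a slightly different (but equally elementary) route that the paper does not use; either version is fine.
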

\begin{proof} If $c = d$ then clearly the lemma holds with equality. Otherwise if $c > d$ then
\begin{align*}  
&~\frac{a + c}{b + c} \ge \frac{a + d}{b + d}\\
\Leftrightarrow ~& (a + c)(b + d) \ge (a + d)(b + c) & b > 0\\
\Leftrightarrow ~& ab + bc + ad + cd \ge ab + bd + ac + cd\\
\Leftrightarrow ~& bc + ad \ge bd + ac\\
\Leftrightarrow ~& b(c - d) \ge a(c - d)\\
\Leftrightarrow ~& b \ge a  & c > d & \qedhere
\end{align*} \end{proof}

Our analysis relies on conditioning on the fact that once the quality of a particular prompt, $x$,  is high enough, it is always selected by $\filter$ and is never sent to ${\tt labeler}_\beta$. Formally, fix the number of iterations, $T$, the set of prompts, $P$, and the quality of the weak data, $\beta$. We define event $E$, as follows:
\begin{center}
Event $E \equiv$ For all $t \in [T]$ and $x \in P$ if $\baq_{t-1}(x) \ge \beta$ then $x \not\in P^-_t$,
\end{center}
\noindent and to simplify notation, we drop the dependence of $E$ on $T$, $P$ and $\beta$. 

\begin{lemma} \label{lem:unionbound} If the repeat parameter $k \geq \frac{2 \log T + \log |P|}{\beta\gamma}$ then event $E$ occurs with probability at least $1 - \frac1T$.\end{lemma}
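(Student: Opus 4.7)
The plan is to use the strong-learner property from Definition \ref{defn:learner} to convert the training-set quantity $\baq_{t-1}(x)$ into a probability under $g_{t-1}$, control the best-of-$k$ failure probability independently in each round, and close with a union bound over $[T] \times P$.

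First I would observe that for every $x \in P$ and every $t \ge 2$, the training set $\CD_{t-1}$ contains at least one entry with prompt $x$ (each iteration inserts every $x \in P$ into either $S^+_s$ or $D_s$), so $\CD_{t-1}(\cdot \mid x)$ is a bona fide probability distribution on $\CY$. By Definition \ref{defn:learner}, $g_{t-1}(y \mid x) = \CD_{t-1}(y \mid x)$ on the support of the latter, and since both are probability distributions that sum to one, they must agree everywhere. Hence
\[
\textPr{y \sim g_{t-1}(x)}[q(x, y) = 1] \;=\; \sum_{y} g_{t-1}(y \mid x)\, q(x, y) \;=\; \baq_{t-1}(x).
\]
The case $t = 1$ is vacuous in event $E$ because $\baq_0(x) = 0 < \beta$ by the initial-LLM assumption, so the premise $\baq_{t-1}(x) \ge \beta$ cannot hold.

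Next I would fix $(t, x)$ and condition on all randomness through iteration $t - 1$, which fixes $g_{t-1}$ and therefore $\baq_{t-1}(x)$. The operation $\bestof(P; k, g_{t-1})$ draws $k$ i.i.d.\ samples $y^1, \ldots, y^k \sim g_{t-1}(x)$, and $x \in P^-_t$ exactly when $q(x, y^i) = 0$ for all $i \in [k]$. Under the hypothesis $\baq_{t-1}(x) \ge \beta$, each sample is incorrect with probability at most $1 - \beta$, so by independence
\[
\Pr\!\left[\, x \in P^-_t \;\middle|\; g_{t-1},\ \baq_{t-1}(x) \ge \beta\, \right] \;\le\; (1 - \beta)^k \;\le\; e^{-\beta k}.
\]
Taking expectation over the prior randomness preserves the bound on the joint event, i.e.\ $\Pr[\,x \in P^-_t \text{ and } \baq_{t-1}(x) \ge \beta\,] \le e^{-\beta k}$.

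Finally I would apply a union bound over all $(t, x) \in [T] \times P$: the event $\neg E$ is precisely the union of the joint events above, so $\Pr[\neg E] \le T\,|P|\, e^{-\beta k}$, which is at most $1/T$ whenever $k \ge (2 \log T + \log |P|)/\beta$. The step most likely to cause a slip is the translation from $\CD_{t-1}(\cdot \mid x)$ to $g_{t-1}(\cdot \mid x)$: Definition \ref{defn:learner} only pins down $g$ on the support of $D$, so one must invoke normalization of both distributions to extend the identity to all of $\CY$, and one must verify that the denominator in $\CD_{t-1}(y \mid x)$ is nonzero for every $x \in P$ (handled above for $t \ge 2$, and vacuously for $t = 1$).
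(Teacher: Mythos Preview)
Your proposal is correct and follows essentially the same route as the paper's proof: identify $\Pr_{y\sim g_{t-1}(x)}[q(x,y)=1]$ with $\baq_{t-1}(x)$ via the strong-learner definition, bound the best-of-$k$ failure probability by $(1-\beta)^k \le e^{-\beta k}$, and finish with a union bound over $[T]\times P$. You are in fact more careful than the paper about two points it leaves implicit---the normalization argument extending $g_{t-1}(\cdot\mid x)=\CD_{t-1}(\cdot\mid x)$ from the support to all of $\CY$, and the vacuity of the $t=1$ case---so the added rigor is welcome but not a departure in method.
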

\begin{proof} By the definition of $\generate$ (Definition \ref{defn:generate}), each $x \in P$ is labeled $k$ times by $g_{t-1}$ in iteration $t$, with each label drawn independently from distribution $g_{t-1}(x)$. Thus we know that if $\baq_{t-1}(x) \ge \beta$ then $x \in P^-_t$ with probability at most $(1 - \beta\gamma)^k$. This is because the probability that a synthetic label does not prevent a prompt from being added to $P^-_t$ is at most $1 - \beta + \beta(1 - \gamma) = 1 - \beta\gamma$, which is the probability that the label is low-quality \emph{or} that it is high-quality but is not recognized by $\filter_\gamma$ (Definition \ref{defn:filter}).  Therefore
\begin{align*}
    \Pr[\neg E] &= \Pr\left[\exists t \in [T] \textrm{ and } x \in P \textrm{ such that } \baq_{t-1}(x) \ge \beta \textrm{ and }x \in P^-_t\right]\\
    &\le \sum_{t=1}^T \sum_{x \in P} \Pr[\baq_{t-1}(x) \ge \beta \textrm{ and }x \in P^-_t]\\
    &\le T |P| (1 - \beta\gamma)^k\\
    &\le T |P| \exp(-\beta\gamma k)\\
    &\le T |P| \exp(-2 \log T - \log |P|) & \textrm{Assumption about }k\\
    &= T |P| \frac{1}{T^2} \frac{1}{|P|}\\
    &= \frac1T & \qedhere
\end{align*}\end{proof}

The next result is our key lemma. It says that if event $E$ occurs then (a) $P^-_t$ contains all and only the prompts that must have been incorrectly labeled by the previous iteration's LLM, (b) the size of $P^-_t$ shrinks exponentially over time, (c) once a prompt is outside $P^-_t$ it remains that way, and (d) prompts outside of $P^-_t$ are correctly labeled by the previous iteration's LLM with a probability that is bounded above zero.

\begin{lemma} \label{lem:main} Fix $T$. Let $1 \le t \le T$ and $x \in P$. If event $E$ occurs then all of the following hold:
\begin{enumerate}
    \item[(a)] $x \in P^-_t$ if and only if $\baq_{t-1}(x) = 0$.
    \item[(b)] $|P^-_r| \le (1 - \beta)^{r - s} |P^-_s|$ for all $r, s \in [t]$ such that $r \ge s$.
    \item[(c)] There exists $r \in [t]$ such that $x \in P^-_s$ for all $s \in [r]$ and $x \not\in P^-_s$ for all $s \in [t] \setminus [r]$.
    \item[(d)] Let $r \in [t]$ satisfy the conditions of part (c). If $r < t$ then 
    \[
    \baq_t(x) \ge \frac{\alpha + t - r}{\frac{\alpha\left(1 - (1 - \beta)^r\right)}{\beta} + t - r} \ge \beta.
    \]
\end{enumerate}\end{lemma}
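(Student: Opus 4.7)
The plan is to prove (a)--(d) jointly by induction on $t$. The base case $t = 1$ is immediate: by the standing assumptions $\baq_0 \equiv 0$ and $P^-_1 = P$, so (a) holds trivially in both directions, (b) and (c) follow with $r = 1$, and (d) is vacuous (it only speaks when $r < t$). For the inductive step, assume (a)--(d) at iteration $t$ and establish them at $t+1$ in the order (a), (b), (c), (d).

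The reverse direction of (a) at $t+1$ is immediate from Definition \ref{defn:learner}: if $\baq_t(x) = 0$ then every $(x, y) \in \CD_t$ has $q(x, y) = 0$, so the strong learner produces $g_t$ whose conditional $g_t(\cdot \mid x)$ is supported on incorrect responses; every best-of-$k$ sample therefore has quality $0$ and $x \in P^-_{t+1}$. The forward direction carries the real content. Assume $x \in P^-_{t+1}$ and, for contradiction, $\baq_t(x) > 0$; by the definition of event $E$ it suffices to reach $\baq_t(x) \ge \beta$. Use inductive (c) to split on the earlier exit time $r$: if $x$ already left $P^-$ at some $r < t$, inductive (d) directly gives $\baq_t(x) \ge \beta$; otherwise $x \in P^-_s$ for every $s \in [t]$, so inductive (a) and Lemma \ref{lem:zeroes} force $q_s(x) = 0$ for $s < t$ while Lemma \ref{lem:zeroone} and $\baq_t(x) > 0$ force $q_t(x) = 1$. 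Lemma \ref{lem:quality} then collapses to $\baq_t(x) = \lambda_t / \sum_{s=1}^t \lambda_s$, and the geometric bound $\lambda_s \le (1-\beta)^{t-s} \lambda_t$ (from inductive (b)) yields $\baq_t(x) \ge \beta / (1 - (1-\beta)^t) \ge \beta$.

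Parts (b) and (c) are essentially corollaries. Part (a) at $t+1$ together with Lemma \ref{lem:zeroes} identifies $P^-_{t+1}$ with the set of $x \in P^-_t$ that the weak labeler just failed on ($q_t(x) = 0$); Definition \ref{defn:labeler} caps this at a $(1-\beta)$ fraction of $P^-_t$, which combined with inductive (b) extends (b). Part (c) follows from $P^-_{t+1} \subseteq P^-_t$: if $x \in P^-_{t+1}$ take $r = t+1$, otherwise retain the $r$ supplied by inductive (c). For (d), fix $r < t+1$ from (c). The bookkeeping used in (a) gives $q_s(x) = 0$ for $s < r$, $q_r(x) = 1$, and $P^+_s(x) = q^+_s(x) = 1$ for $s > r$, so Lemma \ref{lem:quality} produces
\[
\baq_{t+1}(x) \;=\; \frac{\lambda_r + (t+1-r)}{\sum_{s=1}^r \lambda_s + (t+1-r)}.
\]
Bounding $\sum_{s=1}^r \lambda_s \le \lambda_r (1-(1-\beta)^r)/\beta$ via inductive (b) and then observing that the resulting ratio is non-increasing in $\lambda_r$ (since $(1-(1-\beta)^r)/\beta \ge 1$), together with $\lambda_r = \alpha / |P^-_r| \le \alpha$ (because $x \in P^-_r$ forces $|P^-_r| \ge 1$), delivers the claimed bound; verifying that this bound is at least $\beta$ reduces to the one-line inequality $\alpha(1-\beta)^r + (1-\beta)(t+1-r) \ge 0$.

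The main obstacle is the ``fresh escape'' subcase $r = t$ inside the forward direction of (a): the accumulated dataset $\CD_t$ contains $t - 1$ rounds' worth of weak-labeler mistakes on $x$ against a single newly correct label, so it is not obvious that the correct label dominates in the conditional $\CD_t(\cdot \mid x)$. What rescues the argument is that (b) forces $|P^-_s|$ to shrink geometrically, which via $\lambda_s = \alpha/|P^-_s|$ makes the weights grow geometrically; the fresh correct label thus carries at least a $\beta$-fraction of the total mass despite being outnumbered in raw count, which is precisely where the boosting-style re-weighting toward hard prompts enters the analysis.
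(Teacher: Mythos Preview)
Your proof is correct and follows the same inductive scheme as the paper: both prove (a)--(d) jointly, drive the argument through Lemmas~\ref{lem:quality}, \ref{lem:zeroes}, \ref{lem:zeroone}, and extract the geometric decay $|P^-_{t+1}|\le(1-\beta)|P^-_t|$ from Definition~\ref{defn:labeler} in the same way. Your explicit handling of the ``fresh escape'' subcase $r=t$ in the forward direction of (a) --- computing $\baq_t(x)=\lambda_t/\sum_{s\le t}\lambda_s\ge\beta$ directly --- is in fact more careful than the paper's own write-up, which simply cites inductive (d) there even though (d) at level $t$ speaks only when $r<t$.
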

\begin{proof} The proof will proceed by induction. We begin by proving the base case, $t = 1$. To prove part (a), note that by assumption we have $\baq_0(x) = \E_{y \sim g_0(x)}[q(x, y)] = 0$, so we only need to show that $x \in P^-_1$. Since $\E_{y \sim g_0(x)}[q(x, y)] = 0$, we know that $x$ cannot be correctly labeled in $S_1$, which implies $x \in P^-_1$. Part (b) follows immediately from the observation that when $t = 1$ we have $r = s = 1$. Part (c) holds immediately by letting $r = 1$, since in this case $[t] \setminus [r]$ is empty, and we have already shown $x \in P^-_1$ in part (a). Part (d) holds vacuously because $r < t$ must be false when $t = 1$. 

Now assume for induction that the lemma holds for a fixed $t \ge 1$. We will prove the lemma for the case $t + 1$. To prove part (a), first assume $\baq_t(x) = 0$, which is the premise of the `if' direction. By Definition \ref{defn:learner} we have 
\[
\E_{y \sim g_t(x)}[q(x, y)] = \sum_y \CD_t(x, y)q(x, y) = \baq_t(x) = 0
\]
which implies that $x$ cannot be correctly labeled in $S_{t+1}$, and therefore $x \in P^-_{t+1}$. Now assume $x \in P^-_{t+1}$, which is the premise of the `only if' direction. To force a contradiction, assume that $\baq_t(x) > 0$. By part (d) of the inductive hypothesis, this implies $\baq_t(x) \ge \beta$. Since event $E$ occurred, we have that $x \not\in P^-_{t+1}$, which is a contradiction. This completes the proof of part (a).

To prove part (b), choose any $r, s \in [t+1]$ such that $r \ge s$. If $r = s$, part (b) follows immediately. If $r < t+1$ and $s < t+1$ then part (b) follows from the inductive hypothesis. Henceforth assume $s < r = t+1$. Let
\[
D^+_t = \{(x, y) \in D_t : q(x, y) = 1\}
\]
be the subset of $D_t$ that is correctly labeled. We have
\begin{align*}
(1 - \beta) |P^-_t| &= (1 - \beta)|D_t|\\
&\ge |D_t| - |D^+_t| & \textrm{Definition \ref{defn:labeler} of }{\tt labeler}_\beta\\
&= \sum_{x, y} D_t(x, y) - \sum_{x, y} D_t(x, y)q(x, y)\\
&= \sum_{x, y} D_t(x, y)(1 - q(x, y))\\
&= \sum_{x, y} P^-_t(x)D_t(y | x)(1 - q(x, y))\\
&= \sum_x P^-_t(x)(1 - q_t(x))\\
&= \sum_x P(x)\indic{\baq_{t-1}(x) = 0}(1 - q_t(x)) & \textrm{Inductive hypothesis, part (a)}\\
&= \sum_x P(x)\indic{\baq_{t-1}(x) = 0}\indic{q_t(x) = 0} & \textrm{Lemma }\ref{lem:zeroone}\\
&= \sum_x P(x)\indic{\baq_t(x) = 0} & \textrm{Lemma }\ref{lem:zeroes}\\
&= \sum_x P^-_{t+1}(x) & \textrm{Part (a)}\\
&= |P^-_{t+1}|
\end{align*}
and therefore
\[
|P^-_{t+1}| \le (1-\beta)|P^-_t| \le (1-\beta)(1-\beta)^{t - s}|P^-_s| = (1-\beta)^{t + 1 - s}|P^-_s| = (1-\beta)^{r - s}|P^-_s|
\]
where the second inequality follows from the inductive hypothesis. This completes the proof of part (b). 

To prove part (c), we must prove the existence of a satisfying iteration $r \in [t+1]$. Let $r' \in [t]$ be the iteration that satisfies part (c) of the inductive hypothesis. If $r' = t$ and $x \not\in P^-_{t+1}$ then we can let $r = t$. If $r' = t$ and $x \in P^-_{t+1}$ then we can let $r = t+1$. If $r' < t$ then we only have to show $x \not\in P^-_{t+1}$, because in that case we can let $r = r'$. Since $r' < t$ we have $x \not\in P^-_t$, and by part (a) we have $\baq_{t-1}(x) > 0$. By Lemma \ref{lem:quality} we have $\baq_t(x) > 0$, and thus by part (a) again we have $x \not\in P^-_{t+1}$. This concludes the proof of part (c).

To prove part (d), let $r \in [t+1]$ be the satisfying iteration from part (c). Note that $r < t + 1$ by the premise of part (d). We first prove that
\begin{equation}
    P^-_{s}(x)q_{s}(x) = 0 \textrm{ for all }s \in [r-1]. \label{eq:zeroes}
\end{equation}
Suppose for contradiction that Eq.~\eqref{eq:zeroes} is not true, which implies that $P^-_{s}(x)q_{s}(x) > 0$ for some $s \in [r-1]$. By Lemma \ref{lem:quality} and the fact that $\alpha > 0$ we have $\baq_{s}(x) > 0$, which implies by part (a) that $x \not\in P^-_{s+1}$, which contradicts part (c). Thus we have proved Eq.~\eqref{eq:zeroes}. We next prove that
\begin{equation}
    P^-_r(x)q_r(x) = 1. \label{eq:one}
\end{equation}
Suppose for contradiction that Eq.~\eqref{eq:one} is not true, which implies by part (c) and Lemma \ref{lem:zeroone} that $P^-_r(x)q_r(x) = 0$. Thus by Eq.~\eqref{eq:zeroes} we have $P^-_s(x)q_s(x) = 0$ for $s \in [r]$. We also have by part (c) that $P^+_s(x) = 0$ for $s \in [r]$. Thus by Lemma \ref{lem:quality} we have $\baq_r(x) = 0$, and this implies by part (a) that $x \in P^-_{r+1}$, which by $r < t + 1$ contradicts part (c). Thus we have proved Eq.~\eqref{eq:one}. We are now ready to complete the proof of part (d). We have
\begin{align*}
\baq_{t+1}(x) &= \frac{\sum_{s=1}^{t+1} \lambda_s P^-_s(x)q_s(x) +  P^+_s(x)q^+_s(x)}{\sum_{s=1}^{t+1} \lambda_s P^-_s(x) +  P^+_s(x)}  & \textrm{Lemma \ref{lem:quality}}\\
&= \frac{\sum_{s=1}^{t+1} \frac{\alpha}{|D_s|} P^-_s(x)q_s(x) +  P^+_s(x)q^+_s(x)}{\sum_{s=1}^{t+1} \frac{\alpha}{|D_s|} P^-_s(x) +   P^+_s(x)}\\
&= \frac{\frac{\alpha}{|D_r|} + \sum_{t=r+1}^{t+1} 1}{\sum_{s=1}^r \frac{\alpha}{|D_s|} + \sum_{t=r+1}^{t+1} 1} & \textrm{Part (c), Lemma~\ref{lem:zeroone}, Eq.~\eqref{eq:zeroes} and Eq.~\eqref{eq:one}}\\
&= \frac{\frac{\alpha}{|D_r|} + t - r + 1}{\sum_{s=1}^r \frac{\alpha}{|D_s|}  + t - r + 1}\\
&= \frac{\frac{\alpha}{|D_r|} + t - r + 1}{\sum_{s=1}^r \frac{\alpha}{|D_s|}  + t - r + 1}\\
&= \frac{\frac{\alpha}{|P^-_r|} + t - r + 1}{\sum_{s=1}^r \frac{\alpha}{|P^-_s|} + t - r + 1} & \textrm{Definitions of }P^-_t\textrm{ and }P^+_t\\
&\ge \frac{\frac{\alpha}{|P^-_r|} + t - r + 1}{\frac{\alpha}{|P^-_r|}\sum_{s=1}^r (1 - \beta)^{r - s} + t - r + 1} & \textrm{Part (b)}\\
&= \frac{\frac{\alpha}{|P^-_r|} + t - r + 1}{\frac{\alpha}{|P^-_r|}\sum_{s=0}^{r-1} (1 - \beta)^s + t - r + 1}\\
&= \frac{\frac{\alpha}{|P^-_r|} + t - r + 1}{\frac{\alpha(1 - (1 - \beta)^r)}{\beta |P^-_r|} + t - r + 1} & \textrm{Geometric series formula}\\
&= \frac{\alpha + |P^-_r|(t - r + 1)}{\frac{\alpha\left(1 - (1 - \beta)^r\right)}{\beta} + |P^-_r|(t - r  + 1)}\\ 
&\ge \frac{\alpha + t - r + 1}{\frac{\alpha\left(1 - (1 - \beta)^r\right)}{\beta} + t - r + 1} & \textrm{Lemma \ref{lem:tech} and }|P^-_r| \ge 1\textrm{ (by choice of }r\textrm{)}\\
\end{align*}
which proves the first inequality of part (d). Continuing from above
\begin{align*}
    \baq_{t+1}(x) &\ge \frac{\alpha + t - r + 1}{\frac{\alpha\left(1 - (1 - \beta)^r\right)}{\beta} + t - r + 1} & \textrm{From above}\\
    & \ge \frac{\alpha}{\frac{\alpha\left(1 - (1 - \beta)^r\right)}{\beta}} & \textrm{Lemma \ref{lem:tech}}\\
    &= \frac{\beta}{1 - (1 - \beta)^r} & \alpha > 0\\
    &\ge \beta & \beta > 0
\end{align*}
which proves the second inequality of part (d).
\end{proof}

We are now ready to complete the proof of Theorem \ref{thm:main}. Assume that event $E$ occurs, which by Lemma \ref{lem:unionbound} happens with probability at least $1 - \frac1T$. For each prompt $x \in P$ let $r_x$ be the iteration that satisfies Lemma \ref{lem:main}(c) when the lemma is applied to prompt $x$ and iteration $T$. Let $r = \frac{\log(2/\eps)}{\beta}$, and note that by assumption $r < T$. We have
\begin{align*}
    &~ \textstyle \Pr_{x \sim P, y \sim g_T(x)}[q(x, y) = 1]\\
    =&~ \E_{x \sim P, y \sim g_T(x)}[q(x, y)]\\
    =&~ \E_{x \sim P}\left[\sum_y \CD_T(y | x)q(x, y)\right] & \textrm{Definition }\ref{defn:learner}\\
    =&~ \E_{x \sim P}[\baq_T(x)]\\
    \ge&~ \textstyle \E_{x \sim P}[\baq_T(x) ~|~ r_x \le r] \Pr_{x \sim P}[r_x \le r]\\
    =&~ \textstyle \E_{x \sim P}[\baq_T(x) ~|~ r_x \le r] \Pr_{x \sim P}[x \not\in P^-_{r+1}] & \textrm{Lemma }\ref{lem:main}\textrm{(c)}\\
    =&~ \textstyle \E_{x \sim P}[\baq_T(x) ~|~ r_x \le r] \left(1 - \frac{|P^-_{r+1}|}{|P|}\right)\\
    =&~ \textstyle \E_{x \sim P}[\baq_T(x) ~|~ r_x \le r] \left(1 - \frac{|P^-_{r+1}|}{|P^-_1|}\right) & \textrm{Lemma }\ref{lem:main}\textrm{(a)}\\
    \ge&~ \textstyle \E_{x \sim P}[\baq_T(x) ~|~ r_x \le r] \left(1 - (1 - \beta)^r\right) & \textrm{Lemma }\ref{lem:main}\textrm{(b)}\\
    \ge&~ \min_{x : r_x \le r} \frac{\alpha + T - r_x}{\frac{\alpha\left(1 - (1 - \beta)^{r_x}\right)}{\beta} + T - r_x} \left(1 - (1 - \beta)^r\right) & \textrm{Lemma }\ref{lem:main}\textrm{(d)}\\
    \ge&~\min_{x : r_x \le r} \frac{\alpha + T - r_x}{\frac{\alpha}{\beta} + T - r_x} \left(1 - (1 - \beta)^r\right)\\
    \ge&~ \frac{\alpha + T - r}{\frac{\alpha}{\beta} + T - r} \left(1 - (1 - \beta)^r\right) & \textrm{Lemma }\ref{lem:tech}\\
    \ge&~ \frac{\alpha + T - r}{\frac{\alpha}{\beta} + T - r} \left(1 - e^{-\beta r}\right)\\
    =&~ \frac{\alpha + T - r}{\frac{\alpha}{\beta} + T - r} \left(1 - \frac{\eps}{2}\right)
\end{align*}
Since
\[
T \ge \frac{\log(2/\eps)}{\beta} + \frac{2\alpha}{\beta\eps} = r + \frac{2\alpha}{\beta\eps}
\]
it is easy to show via algebra that
\[
\frac{\alpha + T - r}{\frac{\alpha}{\beta} + T - r} \ge 1 - \frac{\eps}{2}
\]
and plugging this into the final expression above proves $\Pr_{x \sim P, y \sim g_T(x)}[q(x, y) = 1] \ge \left(1 - \frac{\eps}{2}\right)^2 \ge 1 - \eps$, which proves the theorem.

\section{Additional Plots}

\begin{figure}[h]
  \centering
  \includegraphics[width=0.7\textwidth, trim=0 0 0 0, clip]{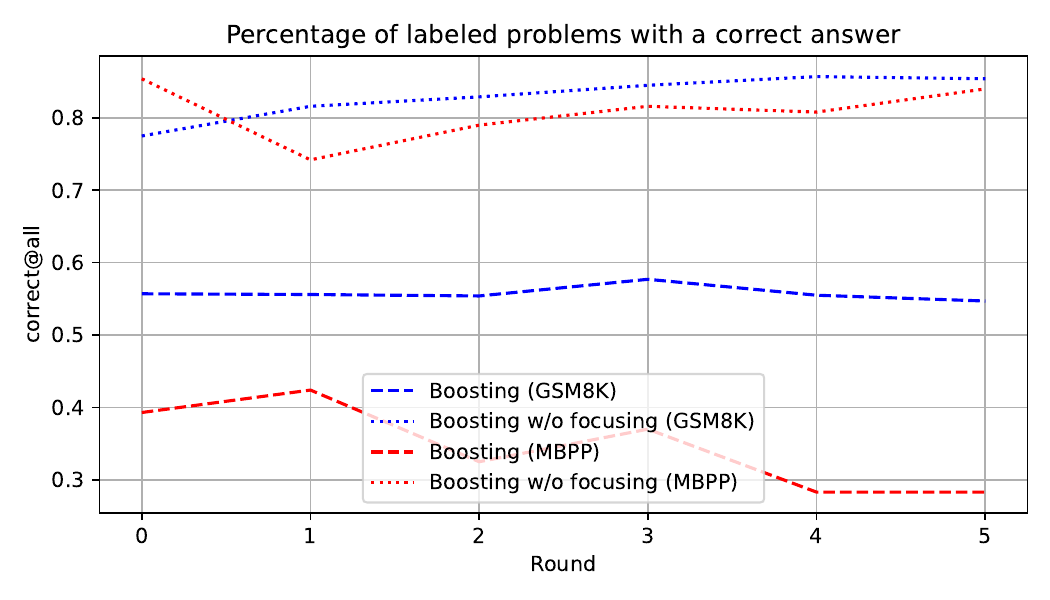} 
  \caption{Labeler accuracy across rounds. These results use \emph{weak data (A)}. Since training accuracy increases across rounds, the weak labeler gets more queries per question in both cases. Despite this, for \emph{Boosting} we see that accuracy is relatively constant for GSM8K and decreasing for MBPP. This is because we focus on increasingly harder problems. In \emph{Boosting w/o focusing}, we observe labeler accuracy increasing because we do not focus labeler efforts on the highest difficulty problems.}
  \label{fig:labeler}
\end{figure}

In Figure \ref{fig:lengths}, we plot average response length on the test set as a function of training rounds. Average response length does not change significantly throughout training, and incorrect answers are longer on average. 

\begin{figure}[h]
  \centering
  \begin{subfigure}[t]{0.40\textwidth}
    \centering
    \includegraphics[width=\linewidth,trim=0.0cm 0.2cm 0.0cm 0.85cm, clip]{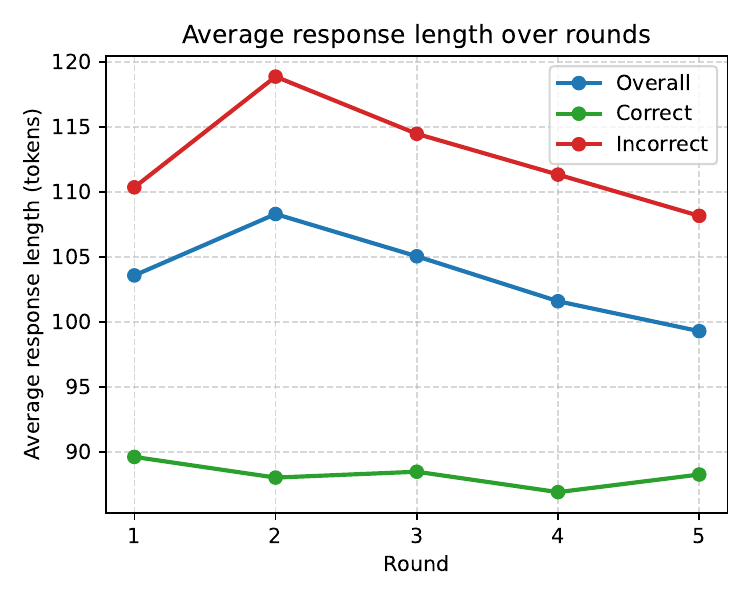}
    \caption{Average response length, \emph{Boosting (A)}}
  \end{subfigure}
  \qquad
  \begin{subfigure}[t]{0.40\textwidth}
    \centering
    \includegraphics[width=\linewidth,trim=0.0cm 0.2cm 0.0cm 0.85cm, clip]{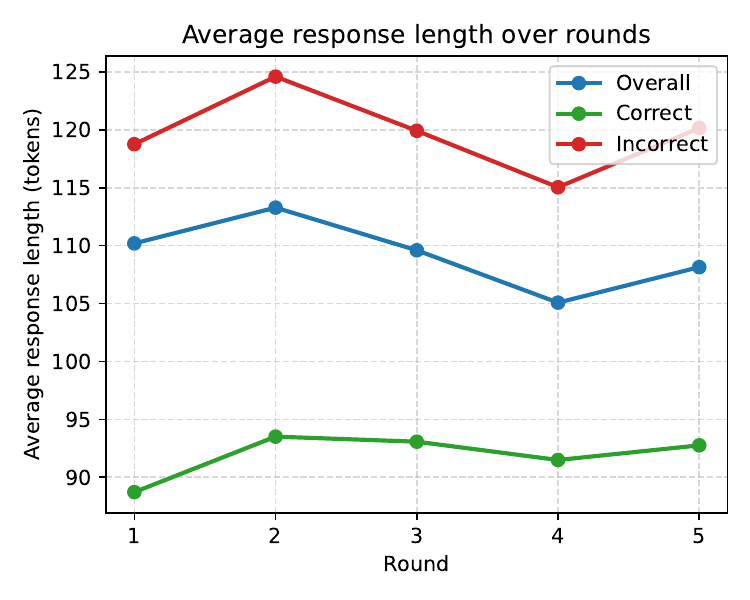}
    \caption{Average response length, \emph{Boosting (B)}}
  \end{subfigure}
  \caption{Average length of responses to GSM8K test set problems across rounds for \emph{Boosting} experiments.}
  \label{fig:lengths}
\end{figure}

\newpage
\section{Examples}\label{sec:qualitative-analysis}
In Tables \ref{tab:progress-a} and \ref{tab:progress-b} we present specific examples of the model improving its answer to a particular question across rounds.

\begin{table}[!h]
\centering
\scriptsize
\begin{longtable}{|p{3cm}|p{12cm}|}
\hline
\emph{Question} & Bill starts on the 3rd floor. He rides the elevator up to the floor that's equal to 4 times his starting floor plus 6. What floor is Bill on now?  \\ \hline
\emph{Response (round 1)} & Let's use algebra to solve this problem. Let x be the number of floors Bill is on now.

The number of floors Bill is on now is equal to 4x + 6.

We know that Bill starts on the 3rd floor, so x = 3.

Therefore, Bill is on the 3rd floor now.

The answer is 3. {\color{Red} $\times$} \\ \hline
\emph{Response (round 2)} & Let's use algebra to solve this problem. Let x be the floor Bill is on now.

The floor Bill is on now is equal to 4x + 6.

We know that Bill starts on the 3rd floor, so x = 3.

Therefore, Bill is on the 3rd floor now.

The answer is 3. {\color{Red} $\times$} \\ \hline
\emph{Response (round 3)} & Let us solve it using algebra. Let x be the floor Bill is on now.

The floor he rides up to is 4x + 6.

Therefore, the floor he rides up to is equal to the floor he starts on plus 6.

So, we have the equation:

x = 3 + 6

x = 9

Therefore, Bill is on the 9th floor now.

The answer is 9. {\color{Red} $\times$} \\ \hline
\emph{Response (round 4)} & Let us solve it using algebra. Let x be the floor Bill is on now.

The floor he is on now is 4x + 6.

We know that he starts on the 3rd floor, so x = 3.

Therefore, the floor he is on now is 4(3) + 6 = 12 + 6 = 18.

Bill is on the 18th floor now.

The answer is 18. {\color{Green} $\checkmark$} \\ \hline
\emph{Response (round 5)} & Let us solve it using algebra. Let x be the floor Bill is on now.

The floor he is on now is 4x + 6.

We know that he starts on the 3rd floor, so x = 3.

Therefore, the floor he is on now is 4(3) + 6 = 12 + 6 = 18.

Bill is on the 18th floor now.

The answer is 18. {\color{Green} $\checkmark$} \\ \hline
\end{longtable}
\caption{Tracking responses to problem {\tt id=253} from the GSM8K test set, across rounds of a \emph{Boosting (A)} run. The model always sets up the right expression ($4x+6$), but in the first two rounds reports $x=3$ as the answer instead of substituting it into the expression. The correct substitution is done in round 4, and the correct answer is maintained in round 5.}\label{tab:progress-a}
\end{table}
\addtocounter{table}{-1}

\begin{table}
\centering
\scriptsize
\begin{longtable}{|p{3cm}|p{12cm}|}
\hline
\emph{Question} & The Kennel house keeps 3 German Shepherds and 2 Bulldogs. If a German Shepherd consumes 5 kilograms of dog food and a bulldog consumes 3 kilograms of dog food per day. How many kilograms of dog food will they need in a week?  \\ \hline
\emph{Response (round 1)} & Let's solve this problem using algebra.

**Step 1: Find the total amount of dog food consumed by the German Shepherds in a week:**

* German Shepherds: 3 dogs

* Dog food consumed per day: 5 kilograms

* Dog food consumed in a week: 3 dogs * 5 kilograms/dog = 15 kilograms

**Step 2: Find the total amount of dog food consumed by the Bulldogs in a week:**

* Bulldogs: 2 dogs

* Dog food consumed per day: 3 kilograms

* Dog food consumed in a week: 2 dogs * 3 kilograms/dog = 6 kilograms

**Step 3: Find the total amount of dog food consumed in a week:**

* Total dog food consumed: 15 kilograms + 6 kilograms = 21 kilograms

Therefore, the kennel house will need 21 kilograms of dog food in a week.

The answer is 21. {\color{Red} $\times$} \\ \hline
\emph{Response (round 2)} & Let's calculate the total amount of dog food needed in a week.

**German Shepherds:**

- Number of German Shepherds: 3

- Dog food consumption per day: 5 kg

- Dog food consumption in a week: 3 * 7 = 21 kg

**Bulldogs:**

- Number of Bulldogs: 2

- Dog food consumption per day: 3 kg

- Dog food consumption in a week: 2 * 7 = 14 kg

**Total dog food consumption in a week:**

- Total dog food consumption in a week: 21 kg + 14 kg = 35 kg

Therefore, the kennel house will need 35 kilograms of dog food in a week.

The answer is 35. {\color{Red} $\times$}\\ \hline
\emph{Response (round 3)} & Let's solve this problem using algebra.

**German Shepherds:**

* Number of German Shepherds: 3

* Dog food consumed per day: 5 kilograms

* Dog food consumed in a week: 3 * 7 = 21 kilograms

**Bulldogs:**

* Number of Bulldogs: 2

* Dog food consumed per day: 3 kilograms

* Dog food consumed in a week: 2 * 7 = 14 kilograms

**Total dog food consumed in a week:**

* Total dog food consumed in a week: 21 kilograms + 14 kilograms = 35 kilograms

Therefore, the Kennel house will need 35 kilograms of dog food in a week.

The answer is 35. {\color{Red} $\times$}\\ \hline
\emph{Response (round 4)} & Let's calculate the total dog food needed per day:

* German Shepherds: 3 * 5 = 15 kilograms

* Bulldogs: 2 * 3 = 6 kilograms

* Total dog food needed per day: 15 + 6 = 21 kilograms

Now, let's calculate the total dog food needed in a week:

* 7 days * 21 kilograms/day = 149 kilograms

Therefore, the kennel house will need 149 kilograms of dog food in a week.

The answer is 149. {\color{Red} $\times$}\\ \hline
\emph{Response (round 5)} & Let's calculate the total dog food consumption per day:

* German Shepherds: 3 * 5 = 15 kilograms

* Bulldogs: 2 * 3 = 6 kilograms

* Total: 15 + 6 = 21 kilograms

Now, let's calculate the total dog food consumption in a week:

* 7 days * 21 kilograms/day = 147 kilograms

Therefore, the Kennel house will need 147 kilograms of dog food in a week.

The answer is 147. {\color{ForestGreen} $\checkmark$}\\ \hline
\end{longtable}
\caption{Tracking responses to problem {\tt id=1266} from the GSM8K test set, across rounds of a \emph{Boosting (B)} run. The round 1 response gets the correct amount per day (21), but does not multiply to get the correct value for the week. Rounds 2 and 3 fail to incorporate the amount of food eaten per dog. Round 4 gets the correct approach, but fails the final multiplication step, which is corrected in the final round.}\label{tab:progress-b}
\end{table}
\addtocounter{table}{-1}

\clearpage
\newpage
\section{Prompts}\label{sec:prompts}

\subsection{GSM8K}

\begin{tcolorbox}[colback=lightgray,colframe=black,arc=4mm,boxrule=1pt,breakable]
\small
As an expert problem solver solve step by step the following mathematical questions.\\

Q:\\
Tina makes \$18.00 an hour. If she works more than 8 hours per shift, she is eligible for overtime, which is paid by your hourly wage + 1/2 your hourly wage. If she works 10 hours every day for 5 days, how much money does she make?\\

A:\\
Here's how to calculate Tina's earnings:\\

**Regular Time:**\\

- Hours per shift: 8 hours\\

- Wage per hour: \$18.00\\

- Regular pay per shift: 8 hours * \$18.00/hour = \$144.00\\

**Overtime:**\\

- Overtime hours per shift: 10 hours - 8 hours = 2 hours\\

- Overtime pay per hour: \$18.00 + (\$18.00 / 2) = \$27.00\\

- Overtime pay per shift: 2 hours * \$27.00/hour = \$54.00\\

**Total per day:**\\

- Regular pay + overtime pay: \$144.00/shift + \$54.00/shift = \$198.00/day\\

**Total for 5 days:**\\

- 5 days * \$198.00/day = \$990.00\\

**Therefore, Tina will make \$990.00 in 5 days.**\\

The answer is 990.\\

Q:\\
Abigail is trying a new recipe for a cold drink. It uses 1/4 of a cup of iced tea and 1 and 1/4 of a cup of lemonade to make one drink. If she fills a pitcher with 18 total cups of this drink, how many cups of lemonade are in the pitcher?\\

A:\\
\#\# Ambiguity in the Problem Statement:\\

There is one main ambiguity in the problem statement:\\

**Total volume vs. Number of servings:** The statement "18 total cups of this drink" could be interpreted in two ways:\\

  * 18 cups of the combined volume\\
  
  * 18 individual servings\\

Let us assume the interpretation "18 cups of the combined volume".\\

\#\# Solution assuming 18 cups of combined volume:\\

**Step 1: Find the proportion of lemonade in one drink:**\\

- Lemonade: 1 1/4 cups\\

- Iced tea: 1/4 cup\\

- Total: 1 1/4 + 1/4 = 1 1/2 cups\\

- Lemonade proportion: (1 1/4) / (1 1/2) = 5/6\\

**Step 2: Calculate the amount of lemonade in the pitcher:**\\

- Total volume: 18 cups\\

- Lemonade proportion: 5/6\\

- Volume of lemonade: 18 * (5/6) = 15 cups\\

Therefore, there are 15 cups of lemonade in the pitcher.\\

The answer is 15.\\

Q:\\
A deep-sea monster rises from the waters once every hundred years to feast on a ship and sate its hunger. Over three hundred years, it has consumed 847 people. Ships have been built larger over time, so each new ship has twice as many people as the last ship. How many people were on the ship the monster ate in the first hundred years?\\

A:\\
Let us solve it using algebra. Let x be the number of people on the ship the monster ate in the first hundred years.\\

The number of people on the ship eaten in the second hundred years is 2x, and in the third hundred years is 4x.\\

Therefore, the total number of people eaten over three hundred years is x + 2x + 4x = 847.\\

Combining like terms, we get 7x = 847.\\

Dividing both sides by 7, we find x = 121.\\

Therefore, there were 121 people on the ship the monster ate in the first hundred years. \\

The answer is 121.\\

Q:\\
\textcolor{blue}{\{problem\}}\\

A:
\end{tcolorbox}

\subsection{MBPP}

\begin{tcolorbox}[colback=lightgray,colframe=black,arc=4mm,boxrule=1pt,breakable]
\small
You are an expert Python programmer. Solve the following Python programming problems.\\

Q: \\
Write a function to find the similar elements from the given two tuple lists. \\
Your code should pass these tests:
{\scriptsize
\begin{verbatim}
assert similar_elements((3, 4, 5, 6),(5, 7, 4, 10)) == (4, 5)
assert similar_elements((1, 2, 3, 4),(5, 4, 3, 7)) == (3, 4)
assert similar_elements((11, 12, 14, 13),(17, 15, 14, 13)) == (13, 14)
\end{verbatim}
}
\phantom{a}

A:
{\scriptsize
\begin{verbatim}
def similar_elements(test_tup1, test_tup2):
  res = tuple(set(test_tup1) & set(test_tup2))
  return (res)
\end{verbatim}
}
\phantom{a}

Q:\\
Write a python function to identify non-prime numbers. \\
Your code should pass these tests:
{\scriptsize
\begin{verbatim}
assert is_not_prime(2) == False
assert is_not_prime(10) == True
assert is_not_prime(35) == True
\end{verbatim}
}
\phantom{a}

A:
{\scriptsize
\begin{verbatim}
import math
def is_not_prime(n):
  result = False
  for i in range(2,int(math.sqrt(n)) + 1):
  if n %
    result = True
  return result
\end{verbatim}
}
\phantom{a}

Q:\\
Write a function to find the largest integers from a given list of numbers using heap queue algorithm.\\
Your code should pass these tests:
{\scriptsize
\begin{verbatim}
assert heap_queue_largest( [25, 35, 22, 85, 14, 65, 75, 22, 58],3)==[85, 75, 65]
assert heap_queue_largest( [25, 35, 22, 85, 14, 65, 75, 22, 58],2)==[85, 75]
assert heap_queue_largest( [25, 35, 22, 85, 14, 65, 75, 22, 58],5)==[85, 75, 65, 58, 35]
\end{verbatim}
}
\phantom{a} 

A:
{\scriptsize
\begin{verbatim}
import heapq as hq
def heap_queue_largest(nums,n):
  largest_nums = hq.nlargest(n, nums)
  return largest_nums
\end{verbatim}
}
\phantom{a}

Q:\\
\textcolor{blue}{\{problem\}}\\
Your code should past these tests:\\
\textcolor{blue}{\{tests\}}\\

A:
\end{tcolorbox}

\section{Additional Discussion}

\paragraph{Weighting Examples} Theorem \ref{thm:main} requires $\alpha > 0$ and $\beta > 0$, and it is easy to see that both assumptions are necessary. If $\alpha = 0$ then each training set $\CD_t$ contains no exogenously labeled data, and if $\beta = 0$ then all of the exogenously labeled data could be incorrect. In either case,  and given a worst-case initial LLM $g_0$ that returns an incorrect response to every prompt, the learner has no way to determine the correct response to any prompt.

The astute reader will note that $\alpha$ can also be set arbitrarily close to zero in Theorem \ref{thm:main} without weakening the convergence guarantee. This curious property is a consequence of an idealized assumption that we made for the sake of analytical tractability. Specifically, we assumed that the learner can match the conditional response distribution of every prompt in the training data, no matter how infrequently the prompt appears in the data (see Definition \ref{defn:learner}). In practice, constraints on training time and model size will prevent a learner from perfectly fitting the training data. So it would be useful to extend our results to account for the possibility of an imperfect learner, and we expect that any such extension would imply a non-zero lower bound on $\alpha$. Nonetheless, our current results tell us something  interesting -- computational limitations are the \emph{only} barrier to learning an arbitrarily good LLM, and not, as one might expect, the quality of the weak labeler. 

\paragraph{Filtering Non-Synthetic Data} Algorithm~\ref{alg:boosting} has the property that it only applies filtering on LLM-generated data. As discussed, this accurately models existing methods in the literature.

However, if we consider applying the quality function $q$ on data produced by the weak labeler (that is, data that is not LLM-generated), then there is an alternate solution to the data generation problem. It can be shown that $O(\log(1/\eps)/\beta)$ invocations of the weak labeler would suffice to correctly label all but $\eps$ fraction of the prompts in $P$, and such a dataset could be given to a strong learner to produce an LLM that achieves $O(\eps)$ error. It is worthwhile to reason about why such a solution cannot be deployed in practice.
 
First, and most crucially -- it has been repeatedly demonstrated empirically in the literature that training directly on 100\% correct human responses is sub-optimal compared to training on self-generated synthetic data \citep{singh2024beyond, zelikman2022star}. Indeed this is confirmed in our own experiments where we find that \emph{finetuning on correct answers (Gold SFT) is dramatically outperformed by recursive training}, hence motivating our theoretical study of the problem.

The weak data assumption specifies that $\beta$, while arbitrary, is bounded away from zero. Just as the weak learning assumption might not hold in classical boosting, the weak data assumption might not hold in our setting. We argue that iteratively filtering the weak labeler's output should result in a precipitous drop in the fraction of correctly-labeled examples. As an example, suppose human labelers provide good responses to the $\beta_1$ easiest coding prompts in some prompt set. One should expect that asking similarly-qualified labelers to respond to the remaining prompts results in a $\beta_2 \ll \beta_1$ yield of quality responses, as all but the easiest prompts have been answered. In contrast, a continually improved LLM endows a human with more flexibility for future responses, such as rewriting nearly high-quality solutions provided by the last iteration of LLM, making a non-vanishing $\beta$ a much more reasonable assumption. 

While it keeps the setting simple to presume that $q$ can be evaluated on any labeled example, this is  an overly permissive assumption. LLM-generated synthetic data can be made to include reasoning traces, and often produces responses that the LLM itself can verify as high quality. This facilitates the construction of automated quality checkers, which are much more difficult to construct when the labels are produced by a human, and therefore contain reasoning traces and responses that are unfamiliar to the current generation of LLM. This is born out in the literature, where quality verification of LLM-generated synthetic data is relatively easy to implement~\citep{singh2024beyond,yang2024self,zelikman2022star}. 

Finally, and somewhat remarkably, Algorithm \ref{alg:boosting} with $\alpha = \epsilon$ achieves the same finite-time error rate as this baseline while only ever evaluating the quality of LLM-generated data. Thus, the approach taken in practice matches the convergence rate that would be experienced under a much more powerful set of assumptions. 

\section{Additional Experimental Results}

In this section, we provide additional experimental results. Specifically, we conduct the following experiments:
\begin{itemize}
    \item \textbf{Off-policy boosting:} We fixed the learner to be Gemma 2 2B and switched out the labeler from Gemma 2 2B (on-policy) to either Gemma 1 2B or Gemma 7B. The stronger off-policy Gemma 7B model outperforms the on-policy Gemma 2 2B model. Results are pictured in Figure \ref{fig:gsm8k-boosting-labeler}. The weaker Gemma 1 2B significantly underperforms on-policy Gemma 2 2B, but after 5 rounds roughly approaches the performance of \emph{filter only}. Note that the Gemma 1 2B model achieves only 11\% on GSM8K, and based on our theory, we should expect to require more iterations with a weaker labeler.
    
    \item \textbf{Running longer and observing improvement plateau:} In several prior works, it was reported that improvement plateaus after a few rounds. We run some experiments longer ($>$5 rounds). Indeed, in Figure \ref{fig:gsm8k-10rounds} we see limited improvements in test accuracy ($\approx$2\%) from additional rounds, despite persistent improvements in training accuracy.

\end{itemize}

\begin{figure*}[h]
  \centering
  \includegraphics[width=0.45\textwidth, trim=0cm 0.6cm 0.0cm 0.2cm, clip]{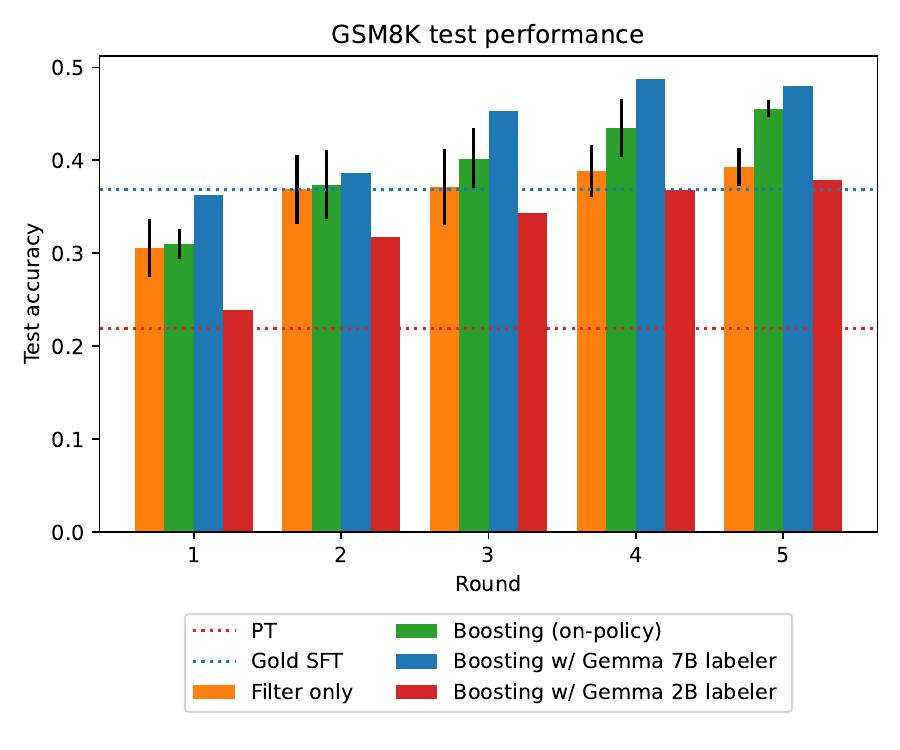} 
  \includegraphics[width=0.45\textwidth, trim=0cm 0.6cm 0cm 0.2cm, clip]{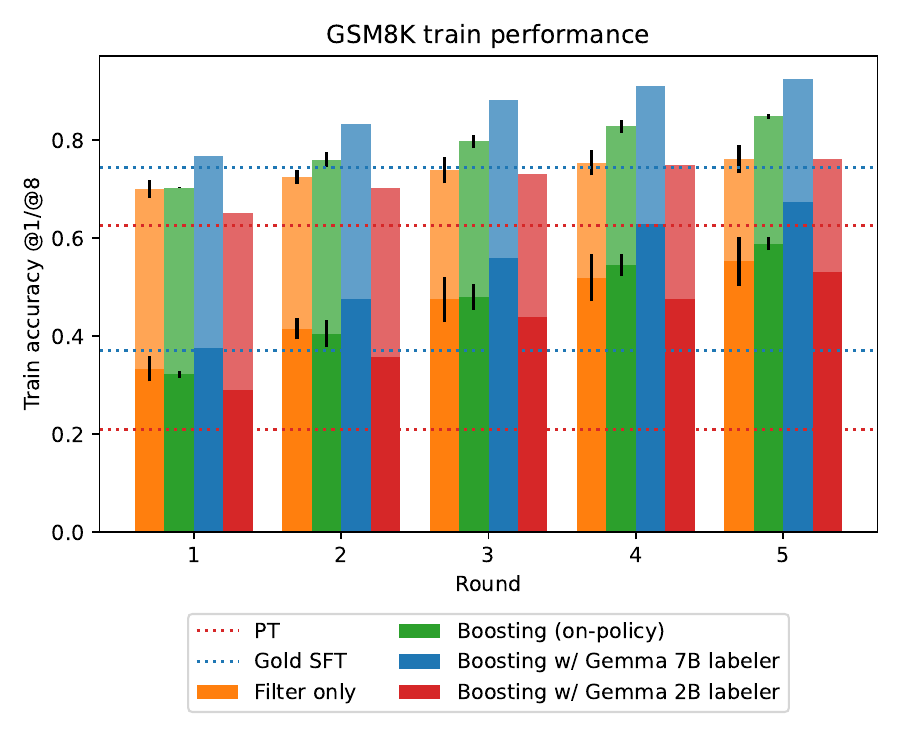} 
  \caption{We experiment with \emph{off-policy labelers} on GSM8K, plotting performance across rounds. \emph{Boosting (on-policy)} is the setting in all prior experiments, employing Gemma 2 2B PT as the labeler. We see improvement from using the stronger Gemma 7B as our labeler. The weaker Gemma 1 2B performs much worse, but approaches the results from \emph{Filter only} after 5 rounds.}
  \label{fig:gsm8k-boosting-labeler}
\end{figure*}

\begin{figure*}[h]
  \centering
  \includegraphics[width=0.85\textwidth, trim=0cm 0.2cm 0.0cm 0.2cm, clip]{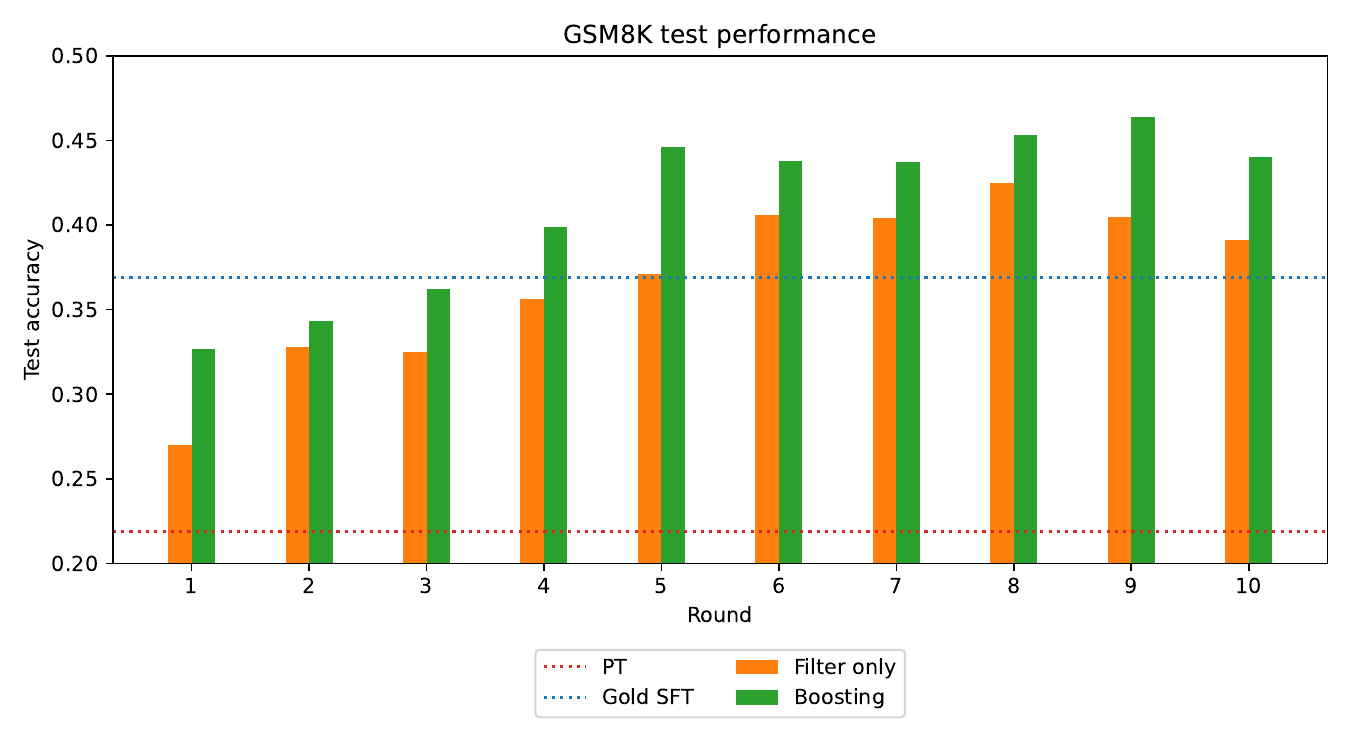} 
  \includegraphics[width=0.85\textwidth, trim=0cm 0.2cm 0.0cm 0.2cm, clip]{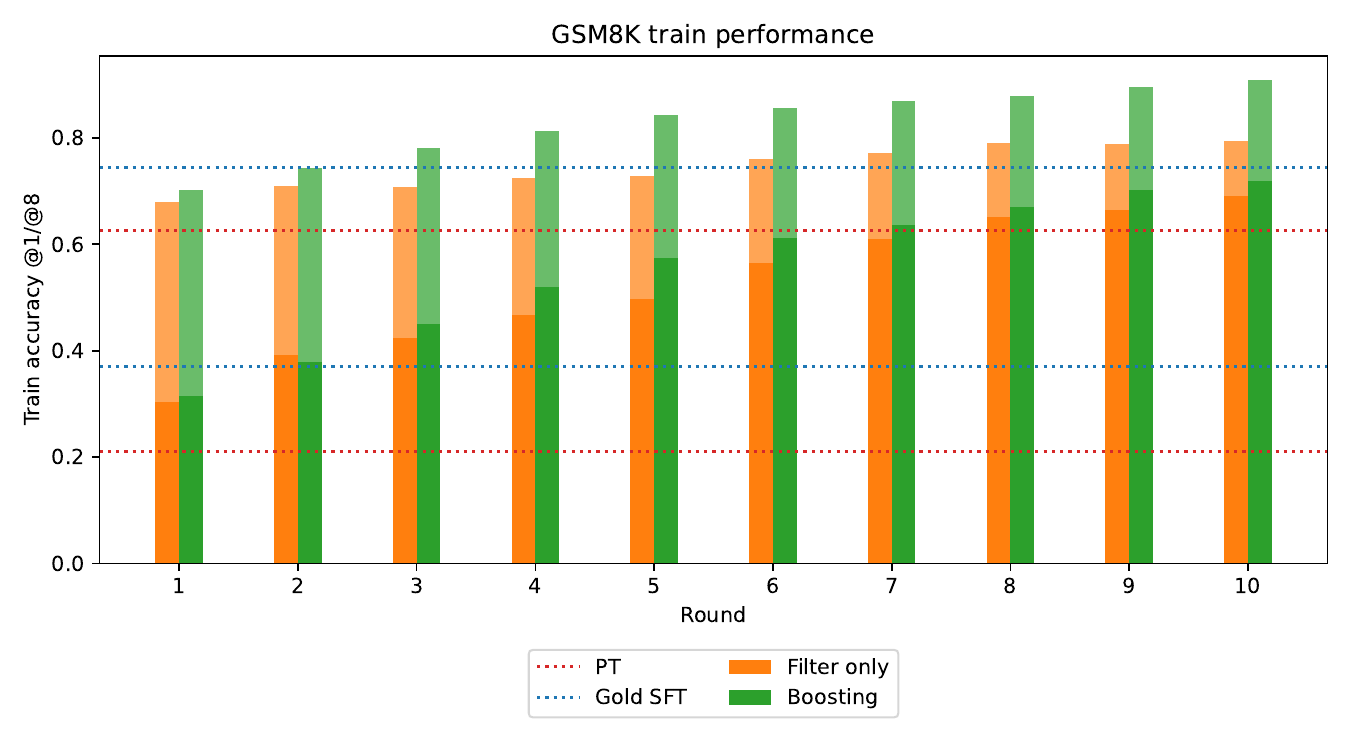} 
  \caption{We plot test and train performance of \emph{Filter only} and \emph{Boosting} on GSM8K across more rounds (10). We see limited improvements in test accuracy ($\approx$2\%) despite persistent improvements in train accuracy.}
  \label{fig:gsm8k-10rounds}
\end{figure*}

\end{document}